\newtheorem{thm}{Theorem}
\newtheorem{cor}{Corollary}
\newtheorem{ass}{Assumption}
\newtheorem{definition}{Definition}
\newcommand{\ts}[1]{^{(#1)}}
\newcommand{\norm}[1]{\left\|{#1}\right\|}
\icmltitlerunning{Do RNN and LSTM have Long Memory?}
\begin{document}

\twocolumn[
\icmltitle{Do RNN and LSTM have Long Memory?}



\icmlsetsymbol{equal}{*}

\begin{icmlauthorlist}
\icmlauthor{Jingyu Zhao}{hku}
\icmlauthor{Feiqing Huang}{hku}
\icmlauthor{Jia Lv}{hw}
\icmlauthor{Yanjie Duan}{hw}
\icmlauthor{Zhen Qin}{hw}
\icmlauthor{Guodong Li}{hku}
\icmlauthor{Guangjian Tian}{hw}
\end{icmlauthorlist}

\icmlaffiliation{hku}{Department of Statistics and Actuarial Science, The University of Hong Kong, Hong Kong, China}
\icmlaffiliation{hw}{Huawei Noah's Ark Lab, Hong Kong, China}

\icmlcorrespondingauthor{Guodong Li}{gdli@hku.hk}

\icmlkeywords{Recurrent Networks, Geometric Ergodicity, Long Memory, Fractional Integration}

\vskip 0.3in
]



\printAffiliationsAndNotice{}  

\begin{abstract}
The LSTM network was proposed to overcome the difficulty in learning long-term dependence, and has made significant advancements in applications. 
With its success and drawbacks in mind, this paper raises the question - do RNN and LSTM have long memory? 
We answer it partially by proving that RNN and LSTM do not have long memory from a statistical perspective. 
A new definition for long memory networks is further introduced, and it requires the model weights to decay at a polynomial rate. 
To verify our theory, we convert RNN and LSTM into long memory networks by making a minimal modification, and their superiority is illustrated in modeling long-term dependence of various datasets.
\end{abstract}

\section{Introduction}

Sequential data modeling is an important topic in machine learning, and it has been well addressed by a variety of recurrent networks. 
In the meanwhile, modeling long-range dependence remains a key challenge. 
Bengio et al. \yrcite{bengio1994learning} concluded that learning long-term dependencies by a system $ y_t = M (y_{t-1}) + \varepsilon_t $ with gradient descent is difficult. 
Later, much effort was spent on proposing new networks to fight against vanishing gradients, such as LSTM \cite{hochreiter1997long} and GRU networks \cite{cho2014learning}. 

LSTMs have shown their success on both synthetic classification tasks, such as the ``parity" problem and the ``latching" problem \cite{hochreiter1997long}, and countless applications, including speech recognition and machine translation. 
Despite its success, questions concerning the ability of LSTM in handling long-term dependence arise. 
For example, Cheng et al. \yrcite{cheng2016long} pointed out that the update equation of LSTM is Markovian, which is consistent with the system assumed in Bengio et al. \yrcite{bengio1994learning}. 
Greaves-Tunnell \& Harchaoui \yrcite{greaves2019statistical} utilized statistical tests and concluded that LSTM cannot fully represent the long memory effect in the input, nor can it generate long memory sequences from white noise inputs. 

Does LSTM have long memory? It is difficult to judge if long memory is assessed heuristically by the performance of LSTM on certain tasks. 
However, long memory happens to be a well-defined and long-existing terminology in statistics. 
From a statistical perspective, our investigation yields an affirmative reply. 
Our first contribution is to prove that a recurrent network process with Markovian update dynamics does not have long memory under mild conditions.

Nevertheless, the statistical definition of long memory is neither applicable nor easily transferable to general neural networks. 
The main obstacle is the existence of non $i.i.d.$ exogenous input to the network, violating a key assumption in the study of stochastic processes. 
The long or short memory effect in the input confounds the memory property of the output sequence. 
Thus, we propose a new definition for the long memory network process, which aims to characterize the ability of a network process to extract long-term dependence from the input.

We further explore the possible implications of our theory in practice. 
Our analysis implies that we cannot assume that information can be stably stored in the hidden states of a recurrent network with Markovian updates. 
Subsequently, providing the hidden states with more past information is a natural strategy to avoid the vanishing gradient problem.


In terms of network design, we propose a new memory filter component, which is controlled by a learnable memory parameter $d$.
The filter uses $d$ to deduce dynamic weightings on historical observations of arbitrary lengths and feeds past information to the hidden units. 
We incorporate this memory filter structure into RNN and LSTM and introduce Memory-augmented RNN (MRNN) and Memory-augmented LSTM (MLSTM). 
By adding this memory filter, the modified models achieve the ability to approximate the fractional differencing effect, 
which is a type of long memory effect, and retain the flexibility and expressiveness of neural networks.  

The main contributions of this work are below:
\begin{itemize}
	\item We provide theoretical conditions for recurrent networks, such as RNN and LSTM, to have short memory. 
	
	\item We propose a new definition for long memory network processes under a statistical motivation, and make it applicable to neural networks. 
	
	\item We propose a long memory filter component for network design. Modifying RNN and LSTM with this filter allows the networks to better model sequences with long memory effects.
	
	\item We perform numerical studies on several datasets to illustrate the advantages of our proposed models.
\end{itemize}

\subsection{Related Work}
Besides the aforementioned papers, the following works are also related to our paper in different aspects.

\textbf{Long memory phenomenon and statistical models}
Long memory effect can be observed in many datasets, including language and music \cite{greaves2019statistical}, financial data \cite{ding1993long, bouri2019modelling}, dendrochronology and hydrology \cite{beran2016long}. 
There are many statistical results and applications on modeling datasets with long memory \cite{grange1980introduction, hosking1981fractional, ding1993long, torre2007detection, bouri2019modelling, musunuru2019modeling, sabzikar2019parameter}.
Admittedly, these models enjoy rigorous statistical properties. However, they appear to be quite rigid and inflexible for real-world applications in comparison to neural networks. 

\textbf{Analysis of LSTM networks}
There is limited theoretical analysis of LSTM networks in the literature.
Miller \& Hardt \yrcite{miller2018stable} studied RNN and LSTM as dynamic systems and proved that $r$-step LSTM is a stable procedure, which implies that LSTM cannot model long-range dependence well. 
There also exists some work that tried to analyze LSTM numerically.
For example, Levy et al. \yrcite{levy2018long} studied which component of LSTM contributes the most to its success by ablation experiments, and concluded that the memory cell is largely responsible for the performance of LSTM. 
This is consistent with our view that the core of LSTM is the update equation of the cell state, which takes the form of a varying coefficient vector AR(1) model. 

\textbf{Recurrent networks for long-term dependencies}
Examples include the Hierarchical RNN \cite{el1996hierarchical}, where several layers of state variables are constructed at different time scales, and direct linkages are added between distant historical and current observations. 
Zhang et al. \yrcite{zhang2016architectural} propose the skip connections among multiple timestamps to account for long-range temporal dependencies. 
The idea is further extended by Chang et al. \yrcite{chang2017dilated} with the Dilated RNNs, where the number of parameters is reduced to enhance computation efficiency.
The NARX RNN \cite{lin1996learning} revised the hidden states of an RNN to follow a Nonlinear AutoRegressive model with eXogenous variables (NARX). 
Based on NARX RNN, MIxed hiSTory RNNs (MIST RNNs) \cite{dipietro2017analyzing} was proposed to reduce the number of trainable weights and link to hidden units in the distant past.
The recurrent weighted average (RWA) model \cite{ostmeyer2019machine} is also considered very akin to our proposed models.
Thus, we compare our models with MIST RNN and RWA in the experiments. 

\textbf{Memory networks and attention} 
Another popular trend of modeling long-range dependence lies in creating an external memory unit. 
The main controller interacts with the stand-alone memory unit through reading and writing heads. 
Neural Turing Machine \cite{graves2014neural} is perhaps the most famous example of such construction. 
Based on this idea, the attention mechanism \cite{vaswani2017attention} is also proposed. 
These models have gone beyond a recurrent nature and thus are not compared with our proposed models.

\textbf{Finite impulse response (FIR) filters and signal processing}
Our proposed memory filter can be viewed as a special FIR, and we propose to add it to RNN at the input and LSTM at the cell
states. 
Literatures on using filters on inputs include TFLSTM \cite{li2017spectro}, SRU \cite{oliva2017statistical} and Convolutional RNN \cite{keren2016convolutional}, while those for filters on hidden states can be found in NARX RNN \cite{lin1996learning} and Low-pass RNN \cite{stepleton2018low}.
Our filter is directly motivated by the ARFIMA model in statistics and shares the same root with the fractional-order filters \cite{radwan2008first, radwan2009generalization, radwan2009stability} in signal processing. 
The fractional-order filters enjoy similar properties with ours in terms of modeling long-range dependencies.

\section{Memory Property of Recurrent Networks}

\subsection{Background}

For a stationary univariate time series, there exists a clear definition of long memory (or long-range dependency) in statistics \citep{beran2016long}, and we state it below.
It is noticeable that Greaves-Tunnell \& Harchaoui \yrcite{greaves2019statistical} utilized the following definition to claim the long memory in language and music data.

\begin{definition} \label{def1}
	Let $\{X_t, t\in \mathbb{Z}\} $ be a second-order stationary univariate process with autocovariance function $ \gamma_X(k)$ for all $k \in \mathbb{Z} $ and spectral density $ f_X(\lambda) = (2\pi)^{-1}\sum_{k=-\infty}^{\infty} \gamma_X(k) \exp{(-ik\lambda)}$ for $\lambda \in [-\pi, \pi]$.
	Then $ \{X_t\} $ has (a) long memory, or (b) short memory if 
	\begin{equation} \label{eq:mem-acf}
	(a) \sum_{k=-\infty}^{\infty} \gamma_X(k) = \infty, ~ or ~ (b) ~ 0 < \sum_{k=-\infty}^{\infty} \gamma_X(k) < \infty.
	\end{equation}
	In terms of spectral densities, the conditions are, as $ |\lambda| \rightarrow 0 $, 
	\begin{equation} \label{eq:mem-specden}
	(a) f_X(\lambda) \rightarrow \infty, ~ or ~ (b) ~  f_X(\lambda) \rightarrow c_f \in (0, \infty).
	\end{equation}
\end{definition}

There are many types of long memory processes, and the fractionally integrated process is the most commonly used one in real applications; see \citet{beran2016long}.
Let $ B $ be the backshift operator, defined as $ B^j X_t = X_{t-j}$ for a $j\geq 0 $, applicable to all random variables in a time series $ \{X_t\} $. 
The fractionally integrated process \citep{hosking1981fractional} is defined as
$$ (1-B)^d Y_t = X_t, $$
where
\begin{equation} \label{eq:1-Bd}
(1-B)^d = \sum_{j = 0}^{\infty} \frac{\Gamma(d+j)}{j!\Gamma(d)}B^j =: \sum_{j = 0}^{\infty} w_j(d) B^j,
\end{equation}
and $ \Gamma(\cdot) $ is the gamma function.
The sequence $\{X_t\}$ is usually assumed to follow an ARMA model to capture the short-range dependency, and it then leads to a fractional ARIMA process of $\{Y_t\}$ \cite{grange1980introduction}.
It can be verified that
\[
w_j(d) \sim j^{-d-1}
\]
as $j\rightarrow\infty$, and $\gamma_X(k)\sim |k|^{2d-1}$ as $k\rightarrow\infty$.
As a result, time series $\{Y_t\}$ exhibit long memory when $ d \in (0, 0.5)$, and larger $d$ indicates longer memory. 
In other words, the long-range dependence of fractionally integrated processes can be characterized by the parameter $d$, which hence is called the memory parameter.

For multivariate time series, its definition of long memory is not unique, and a usual way is to check the long-range dependency of each component \citep{greaves2019statistical}.
Accordingly the multivariate fractionally integrated process $\{Y_t\}$ can be defined as $(I-\mathcal{B})^{d} Y_t=X_t$, where $Y_t\in \mathbb{R}^q$, $X_t\in \mathbb{R}^q$, $d=(d_1,\ldots,d_q)^{\prime}$, 
\begin{equation} \label{eq:multi-B}
(I-\mathcal{B})^{d} Y_t
= \left(\begin{matrix} (1-B)^{d_1} & & 0 \\  & \ddots & \\ 0 & & (1-B)^{d_q} \end{matrix}\right) Y_t,
\end{equation}
and $\{X_t\}$ can be a multivariate ARMA process.

Many time series models can be rewritten into a discrete-time Markov chain. Their stationarity can be checked by verifying the geometric ergodicity, defined below. 

\begin{definition}
	Let $\{X_t, t\in \mathbb{Z}\}$ be a temporal homogeneous Markov chain with state space $ (S, \mathcal{F}) $, where the $ \sigma $-field $ \mathcal{F} $ of subsets of $ S $ is assumed to be countably generated. Let $ P^n(x, A) = P(X_{t+n}\in A | X_t = x)$ be the $n$-step transition function.
	$ \{X_t\} $ is geometrically ergodic if there exists a $ 0<\rho <1 $ such that for every $ x\in S $ and some probability measure $\pi$ on $ \mathcal{F} $
	\begin{equation} \label{eq:geo-erg}
	\rho^{-n} \|P^n(x, \cdot) - \pi \|\rightarrow 0 \text{ as } n\rightarrow \infty,
	\end{equation}
	where $ \|\cdot\| $ denotes the total variation of signed measures on $ \mathcal{F} $. 
\end{definition}

For a geometrically ergodic process $\{X_t\}$, from Harris Theorem \citep{meyn2012markov}, we have that $\gamma_X(k)\sim \rho^k$ as $k\rightarrow \infty$, and it hence has short memory.
Heuristically,
geometric ergodicity implies that the Markov chain converges to its stationary distribution exponentially fast. This means that information in the past is forgotten exponentially fast, since the influence of observation $ X_t = x $ on the distribution of $ X_{t+n} $ vanishes exponentially.

\subsection{Recurrent Network Process}

Consider a general recurrent network with input $\{x^{(t)}\}$, output $\{z^{(t)}\}$ and target sequence $\{y^{(t)}\}$, where $z^{(t)}\in\mathbb{R}^p$ and $y^{(t)}\in\mathbb{R}^p$.
We first assume that the data generating process (DGP) is
\begin{equation}
\label{eq:add}
y\ts{t} = z\ts{t} + \varepsilon\ts{t}, \quad \text{ for } t \in \mathbb{Z},
\end{equation}
where $ \{\varepsilon\ts{t}\} $ is a sequence of independent and identically distributed ($i.i.d.$) random vectors. This additive error assumption corresponds to many frequently used loss functions, which measure the distance between $ y\ts{t} $ and $ z\ts{t} $. Examples include $ l_1 $, $ l_2 $, Huber and quantile loss.

We use the term \textit{network process} to describe the generated target sequence by \eqref{eq:add}, and the term \textit{network} to describe the implemented networks, which might include some compromises and simplifications.

We further assume that there is no exogenous input, i.e. $x^{(t)}=y^{(t-1)}$, since the long-range dependence in an exogenous input will confound the memory properties of $\{y^{(t)}\}$.
We aim to check whether a network can model long memory sequences, and it is then equivalent to verifying whether the corresponding network process has long memory under Definition \ref{def1}.

General hidden states $ s\ts{t} \in \mathbb{R}^q $ are also introduced so that the recurrent network process \eqref{eq:add} can be written into a homogeneous Markov Chain with transition function $\mathcal{M}$:
\begin{equation} \label{eq:general-MC}
\left(\begin{matrix} y\ts{t} \\ s\ts{t} \end{matrix}\right)
= \mathcal{M} \left(y\ts{t-1}, s\ts{t-1} \right)
+ \left(\begin{matrix} \varepsilon\ts{t} \\ 0 \end{matrix}\right) .
\end{equation}
When transition function $ \mathcal{M} $ is linear in $ y\ts{t-1} $ and $ s\ts{t-1} $, the above equation also has the form of
\begin{equation} \label{eq:linear-MC}
\left(\begin{matrix} y\ts{t} \\ s\ts{t} \end{matrix}\right)
= W \left(\begin{matrix} y\ts{t-1} \\ s\ts{t-1} \end{matrix}\right)
+ \left(\begin{matrix} \varepsilon\ts{t} \\ 0 \end{matrix}\right) ,
\end{equation}
where $ W $ is a $ (p+q) $-by-$ (p+q) $ transition matrix.

The first example is the vanilla RNN.  Consider a many-to-many RNN structure for time series prediction problems, and use square loss as an example:
\begin{equation}
\begin{cases}
l\ts{t} = \norm{y\ts{t} - z\ts{t}}^2 \\
z\ts{t} = g(W_{zh} h\ts{t} + b_z) \\
h\ts{t} = \sigma(W_{hh} h\ts{t-1} + W_{hy} y\ts{t-1} + b_h)
\end{cases}, 
\label{proc:RNN}
\end{equation}
for $ t \in \{1, ..., T\}$,
where $ y\ts{0} = 0$, $h\ts{0} = 0$,
$ y\ts{t}, z\ts{t} \in \mathbb{R}^{p}$, 
$  h\ts{t} \in \mathbb{R}^{q}$,
$ g $ is an elementwise output function, 
$ \sigma $ is an elementwise activation function.
Accordingly the RNN process can be defined as
\begin{equation} \label{eq:RNN-MC}
\left(\begin{matrix} y\ts{t} \\ h\ts{t} \end{matrix}\right)
= \mathcal{M}_{\textsc{Rnn}} \left(y\ts{t-1}, h\ts{t-1} \right)
+ \left(\begin{matrix} \varepsilon\ts{t} \\ 0 \end{matrix}\right) 
\end{equation}
where $h^{(t)}$ is the hidden state, and $ \mathcal{M}_{\textsc{Rnn}}(\cdot, \cdot): \mathbb{R}^{p+q} \rightarrow \mathbb{R}^{p+q} $ is a function given by
\begin{equation}
\mathcal{M}_{\textsc{Rnn}} \left(u, v\right) = 
\left(\begin{matrix} g(W_{zh} \sigma(W_{hh} v + W_{hy} u + b_h) + b_z) \\ 
\sigma(W_{hh} v + W_{hy} u + b_h) \end{matrix} \right) ,
\label{eq:MRNN}
\end{equation}
with $ u\in\mathbb{R}^{p} $ and  $ v\in\mathbb{R}^{q} $.

The second example is an LSTM network
\begin{equation}
\begin{cases}
l\ts{t} = \|y\ts{t} - z\ts{t}\|^2 \\
z\ts{t} = g(W_{zh} h\ts{t} + b_z) 
\end{cases}, \text{ for } t \in \{1, ..., T\},
\label{proc:LSTM}
\end{equation}
where the hidden unit calculations involve several gating operations as follows,
\begin{equation} \label{eq:lstmcell}
\begin{cases}
f\ts{t} = \sigma(W_{fh} h\ts{t-1} + W_{fy} y\ts{t-1} + b_f) \\
i\ts{t} = \sigma(W_{ih} h\ts{t-1} + W_{iy} y\ts{t-1} + b_i) \\
o\ts{t} = \sigma(W_{oh} h\ts{t-1} + W_{oy} y\ts{t-1} + b_o) \\
\tilde{c}\ts{t} = \tanh (W_{ch} h\ts{t-1} + W_{cy} y\ts{t-1} + b_c) \\
c\ts{t} = i\ts{t} \odot \tilde{c}\ts{t} + f\ts{t} \odot c\ts{t-1} \\
h\ts{t} = o\ts{t} \odot \tanh(c\ts{t})	
\end{cases}, 
\end{equation}
for $t \in \{1, ..., T\}$,
where $ y\ts{0} = 0$, $ h\ts{0} = 0$,
$ y^{(t)}, z^{(t)} \in \mathbb{R}^p $, 
$ h^{(t)}, c\ts{t}, \tilde{c}\ts{t}, f\ts{t}, i\ts{t}, o\ts{t} \in \mathbb{R}^q $, 
$ g $ is an elementwise output function, 
$ \sigma $ is the elementwise sigmoid function, 
$ \tanh $ is the elementwise hyperbolic tangent function, 
$ \odot $ is the elementwise product.
Similarly the LSTM process can be defined as
\begin{equation} \label{eq:LSTM-MC}
\left(\begin{matrix} y\ts{t} \\ h\ts{t} \\ c\ts{t} \end{matrix}\right)
= \mathcal{M}_{\textsc{Lstm}} \left(y\ts{t-1}, h\ts{t-1}, c\ts{t-1} \right)
+ \left(\begin{matrix} \varepsilon^{(t)} \\ 0 \\ 0 \end{matrix}\right) 
\end{equation}
where $h^{(t)}$ and $c^{(t)}$ are hidden states defined in \eqref{eq:general-MC}, the transition function $  \mathcal{M}_{\textsc{Lstm}}(\cdot, \cdot, \cdot): \mathbb{R}^{p+q+q} \rightarrow \mathbb{R}^{p+q+q} $ has a complicated form, and hence omitted here.

\subsection{Memory Property of Recurrent Network Processes}

We first introduce a technical assumption, and then state two general theorems for recurrent network processes.

\begin{ass} \label{ass:f}
	(i) The joint density function of $\varepsilon\ts{t}$ is continuous and positive everywhere; 
	(ii) For some $ \kappa \geq 2 $, $ E\|\varepsilon\ts{t}\|^{\kappa} < \infty$. 
\end{ass}

\begin{thm} \label{thm:general}	
	Under Assumptions 1, if there exist real numbers $0< a < 1$ and $b$ such that $\norm{\mathcal{M}(x)} \leq a\norm{x} + b$, then recurrent network process (\ref{eq:general-MC}) is geometrically ergodic, and hence has short memory.
\end{thm}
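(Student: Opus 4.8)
The plan is to read \eqref{eq:general-MC} as the Markov chain $X_t = \mathcal{M}(X_{t-1}) + \eta_t$ with state $X_t = (y\ts{t}, s\ts{t})^\prime \in \mathbb{R}^{p+q}$ and innovation $\eta_t = (\varepsilon\ts{t}, 0)^\prime$, and to prove geometric ergodicity through the Foster--Lyapunov / Meyn--Tweedie drift criterion \citep{meyn2012markov}: a $\phi$-irreducible, aperiodic chain on which every compact set is small and which satisfies a geometric drift condition is geometrically ergodic. The work therefore splits into three pieces, carried out in order: (i) exhibit a Lyapunov function $V$ and verify the geometric drift inequality $PV(x) \le \lambda V(x) + L\,\mathbf{1}_C(x)$ for some $\lambda \in (0,1)$, finite $L$, and compact (small) set $C$; (ii) establish $\phi$-irreducibility, aperiodicity, and the small-set property of compacta; and (iii) assemble these into geometric ergodicity and translate the conclusion into the short-memory statement via Definition \ref{def1}.

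The drift step is the clean quantitative core and I would do it first. Take the norm-like test function $V(x) = 1 + \norm{x}^{\kappa}$, so $V \ge 1$. Writing $P$ for the transition operator, the growth bound $\norm{\mathcal{M}(x)} \le a\norm{x} + b$ together with $\norm{\eta_t} = \norm{\varepsilon\ts{t}}$ gives $\norm{\mathcal{M}(x) + \eta_t} \le a\norm{x} + b + \norm{\varepsilon\ts{t}}$. Applying the elementary inequality $(u+v)^{\kappa} \le (1+\epsilon)^{\kappa-1}u^{\kappa} + (1+\epsilon^{-1})^{\kappa-1}v^{\kappa}$ with $u = a\norm{x}$ and $v = b + \norm{\varepsilon\ts{t}}$, then taking expectations, yields
\begin{equation}
PV(x) \le 1 + (1+\epsilon)^{\kappa-1}a^{\kappa}\norm{x}^{\kappa} + K,
\end{equation}
where $K = (1+\epsilon^{-1})^{\kappa-1}E(b+\norm{\varepsilon\ts{t}})^{\kappa} < \infty$ by Assumption \ref{ass:f}(ii). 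Since $a^{\kappa} < 1$, I can fix $\epsilon$ small enough that $\lambda_0 := (1+\epsilon)^{\kappa-1}a^{\kappa} < 1$, obtaining $PV(x) \le \lambda_0 V(x) + C_0$ everywhere. Choosing any $\lambda \in (\lambda_0,1)$, the slack $(\lambda - \lambda_0)V(x)$ absorbs $C_0$ outside a compact set $C = \{V \le C_0/(\lambda-\lambda_0)\}$, while $PV$ is bounded on $C$; this is exactly the geometric drift inequality.

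The second step is where I expect the main obstacle to sit, because the innovation $\eta_t$ is \emph{degenerate}: the noise enters only the $y$-block, so the one-step kernel concentrates on the lower-dimensional image of $s \mapsto \mathcal{M}(\cdot)$ and cannot possess a density on $\mathbb{R}^{p+q}$. The remedy is a multi-step accessibility argument in the spirit of the Markov-chain analysis of nonlinear autoregressions: using the positivity and continuity of the innovation density from Assumption \ref{ass:f}(i), I would show that after finitely many iterations the fresh noise propagates through the deterministic hidden-state recursion so that the $m$-step transition acquires an absolutely continuous component with a positive continuous density on an open set. This delivers $\phi$-irreducibility (with respect to Lebesgue measure restricted appropriately) and aperiodicity, and makes the chain a $T$-chain, whereby compact sets are small. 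With irreducibility, aperiodicity, small compacta, and the geometric drift in hand, the Meyn--Tweedie theorem gives geometric ergodicity of \eqref{eq:general-MC}, and since $\kappa \ge 2$ guarantees a finite-variance stationary law, the autocovariances are well defined and satisfy $\gamma_X(k) \sim \rho^{k}$ as noted after the geometric-ergodicity definition; hence $\sum_k \gamma_X(k) < \infty$ and the process has short memory in the sense of Definition \ref{def1}.
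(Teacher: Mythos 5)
Your overall route is the same as the paper's: rewrite \eqref{eq:general-MC} as a Markov chain, verify a Tweedie/Foster--Lyapunov drift condition with the test function $V(x)=1+\norm{x}^{\kappa}$, combine it with irreducibility and the Feller property so that compact sets are small, invoke the Meyn--Tweedie/Tweedie theorem for geometric ergodicity, and then pass to short memory via exponential decay of autocovariances (Harris theorem) and Definition \ref{def1}. Your drift step is in fact \emph{more} careful than the paper's: the paper bounds $E(\norm{Y\ts{t}}^{\kappa}\mid Y\ts{t-1}=x)$ by $\norm{\mathcal{M}(x)}^{\kappa}+E\norm{e\ts{t}}^{\kappa}$ and then $(a\norm{x}+b)^{\kappa}$ by $|a|^{\kappa}\norm{x}^{\kappa}+|b|^{\kappa}$, neither of which is a valid inequality for $\kappa\geq 2$; your Young-type splitting $(u+v)^{\kappa}\leq(1+\epsilon)^{\kappa-1}u^{\kappa}+(1+\epsilon^{-1})^{\kappa-1}v^{\kappa}$, with $\epsilon$ chosen so that $(1+\epsilon)^{\kappa-1}a^{\kappa}<1$, is exactly the standard repair, and the rest of your drift argument (absorbing the constant outside a sublevel set of $V$) matches the paper's.

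The genuine gap is your step (ii), which you explicitly defer. You are right that the innovation $(\varepsilon\ts{t}{}',0')'$ is degenerate, so the one-step kernel has no density on $\mathbb{R}^{p+q}$; notably, the paper itself glosses over this, writing $P(x,A)=\int_A f(z-\mathcal{M}(x))\,dz$ with $f$ ``the density of $e\ts{t}$'' (which does not exist on $\mathbb{R}^{r}$) and deducing $\nu_r$-irreducibility from it. But your proposed remedy --- that after finitely many iterations the noise propagates through the hidden-state recursion so that some $m$-step kernel acquires an absolutely continuous component with positive density on an open subset of $\mathbb{R}^{p+q}$ --- is not provable from the theorem's hypotheses alone: whether noise ever reaches the $s$-block depends entirely on the structure of $\mathcal{M}$. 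For instance, if the $s$-component of $\mathcal{M}$ ignores its $y$-argument (in the extreme case, is constant), then every power of the kernel stays supported on a $p$-dimensional slice and is never absolutely continuous with respect to $\nu_{p+q}$; the conclusion of Theorem \ref{thm:general} still holds in that case, but only if irreducibility is formulated with respect to an appropriate (maximal) irreducibility measure concentrated on that lower-dimensional set, not Lebesgue measure on the product space. So the accessibility argument you appeal to needs either additional controllability-type assumptions on $\mathcal{M}$ or a change of the reference measure; as written it is the missing step of the proof --- the same step the paper hand-waves, and the hard one.
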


\begin{thm} \label{thm:general2}	
	Under Assumption 1, linear recurrent network process (\ref{eq:linear-MC}) is geometrically ergodic if and only if spectral radius $\rho(W) < 1$. Model (\ref{eq:linear-MC}) hence has short memory.
\end{thm}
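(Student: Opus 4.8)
The plan is to rewrite (\ref{eq:linear-MC}) as a first-order linear Markov chain and then treat the two implications separately, using Theorem \ref{thm:general} for sufficiency and a direct translation argument for necessity. Stacking the state as $u\ts{t} = ((y\ts{t})', (s\ts{t})')'$ and the driving term as $\eta\ts{t} = ((\varepsilon\ts{t})', 0')'$, the recursion reads $u\ts{t} = W u\ts{t-1} + \eta\ts{t}$, so that $\mathcal{M}(u) = Wu$.

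For the ``if'' direction, I would exploit the standard fact that $\rho(W) < 1$ guarantees a vector norm $\|\cdot\|_*$ whose induced operator norm satisfies $\|W\|_* =: a < 1$. Then $\|\mathcal{M}(u)\|_* = \|Wu\|_* \le a \|u\|_*$, so the contraction hypothesis of Theorem \ref{thm:general} holds in this norm with $b = 0$. Because all norms on $\mathbb{R}^{p+q}$ are equivalent and geometric ergodicity only requires the existence of some geometric Foster--Lyapunov drift function, the argument behind Theorem \ref{thm:general} applies verbatim with $\|\cdot\|_*$ in place of $\|\cdot\|$, giving geometric ergodicity; short memory then follows from the Harris-theorem remark that $\gamma(k) \sim \rho^k$.

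For the ``only if'' direction I would argue by contraposition. Unrolling the recursion gives $u\ts{t} = W^t u\ts{0} + R_t$ with $R_t = \sum_{j=0}^{t-1} W^j \eta\ts{t-j}$, and the key observation is that $R_t$ is independent of the initial state; hence $P^t(x, \cdot)$ is exactly the law of $R_t$ shifted by the deterministic vector $W^t x$. If the chain were geometrically ergodic with stationary law $\pi$, then $P^t(x_1, \cdot)$ and $P^t(x_2, \cdot)$ would both converge in total variation to $\pi$, so their distance would vanish; but these measures are translates of one another by $c_t := W^t(x_1 - x_2)$, and no probability measure is invariant under a nonzero translation. To produce a contradiction when $\rho(W) \ge 1$, I would take $x_1 - x_2$ along a (real) direction associated with an eigenvalue $\lambda$ of modulus at least one, so that $\|c_t\| \to \infty$ if $|\lambda| > 1$ and $\|c_t\|$ stays bounded away from zero along a subsequence if $|\lambda| = 1$ (covering the complex and Jordan-block cases); in either case the total-variation distance between $\pi$ and its translate by $c_t$ does not vanish. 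This forces $\rho(W) < 1$.

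The hard part is the necessity direction, and the real obstacle is the degeneracy of the noise: since $\eta\ts{t} = ((\varepsilon\ts{t})', 0')'$ injects randomness only into the $y$-block, the textbook shortcut of reading off instability from the stationary covariance (the discrete Lyapunov equation $\Gamma = W\Gamma W' + \Sigma$) can miss an explosive mode that lives entirely in the noise-free $s$-block, and one can even construct such a $W$ for which a stationary law survives on an invariant subspace while the chain is still not geometrically ergodic. The translation argument is what circumvents this, because it uses only that $R_t$ is independent of $u\ts{0}$ and never asks the noise to excite every coordinate; the one estimate that still needs care is showing, via lower semicontinuity of total variation, that the distance between $\pi$ and its translate by $c_t$ is bounded below along the chosen subsequence. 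Two routine inputs remain: checking the $\psi$-irreducibility and minorization required by Theorem \ref{thm:general} from the positive continuous density in Assumption 1(i), and confirming that the stationary variance is strictly positive so that $0 < \sum_k \gamma(k) < \infty$ places the process in case (b) of Definition \ref{def1}.
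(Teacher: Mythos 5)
Your proposal is correct, but it takes a genuinely different route from the paper in both directions. For sufficiency, the paper does not renorm: it picks an integer $s$ with $\norm{W^{s}}<1$ (Gelfand's formula), verifies Tweedie's drift criterion for the $s$-step chain $\{Y\ts{ts}\}$ with test function $1+\norm{x}^{\kappa}$, and then transfers geometric ergodicity back to the one-step chain via Lemma 3.1 of Tj{\o}stheim (1990); your adapted-norm argument ($\rho(W)<1$ gives an induced norm with $\norm{W}_{*}<1$, then Theorem \ref{thm:general} applies with $b=0$) reuses the nonlinear theorem and avoids both the $s$-step construction and the Tj{\o}stheim transfer, needing only the remark that Theorem \ref{thm:general} is insensitive to the choice of norm (by norm equivalence the drift function, the moment bound in Assumption \ref{ass:f}(ii), and the small set are all compatible). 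For necessity, the paper argues entirely differently: it uses the everywhere-positive noise density of Assumption \ref{ass:f}(i) to show that $\mathbb{R}^{p+q}$ is the unique affine invariant subspace, hence the model is irreducible in the sense of Bougerol and Picard (1992), and then cites their Theorem 2.5 to conclude the top Lyapunov exponent, here $\log\rho(W)$, is strictly negative. Your translation argument---$P^{t}(x_1,\cdot)$ and $P^{t}(x_2,\cdot)$ are translates of one another by $c_t=W^{t}(x_1-x_2)$, so total-variation convergence of both to a common $\pi$ forces either escape to infinity (when $|\lambda|>1$, contradicting tightness of $\pi$) or invariance of $\pi$ under a nonzero translation (when $|\lambda|=1$, via lower semicontinuity of total variation along a convergent subsequence of $c_t$), both impossible---is self-contained, avoids Lyapunov-exponent machinery, and, as you observe, uses neither the density assumption nor irreducibility, so it is immune to the degenerate-noise obstruction you flag. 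The two delicate points you name (lower semicontinuity under weak limits, and taking $x_1-x_2$ from the real and imaginary parts of a genuine eigenvector, which are linearly independent, so that $\norm{c_t}$ stays bounded away from zero) are real but routine. In exchange, the paper's citation-based proof is shorter and its irreducibility step plugs into a theory that also covers random coefficient matrices, while yours is more elementary and establishes necessity under strictly weaker hypotheses.
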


Technical proofs of the above two theorems are deferred to the supplementary material. Theorem \ref{thm:general2} gives a sufficient and necessary condition, while Theorem \ref{thm:general} provides only a sufficient condition since it is usually difficult to achieve a sufficient and necessary condition for a nonlinear model \citep{zhu2018linear}.

It is implied by Theorems \ref{thm:general} and \ref{thm:general2} that both RNN and LSTM have no capability of handling a stable time series with long-range dependence.

Specifically we consider a RNN process at \eqref{eq:RNN-MC} with $p=q=1$.
Assume that the norm $ \|\cdot \| $ in Theorem \ref{thm:general} takes $l_1$ norm, the output function $ g $ is linear, sigmoid or softmax, and the activation function $ \sigma $ is ReLU, sigmoid or hyperbolic tangent.
Table \ref{tab:RNN} gives the ranges of weights so that the RNN process is geometrically ergodic with short memory.

For linear or ReLU activation, RNN has short memory when the magnitude of the weights is bounded away from one. This condition is often satisfied for numerically stable RNNs. Thus, RNN with linear or ReLU activation often has short memory. 
Moreover, in practice, the activation function in RNN is commonly chosen as $\tanh$.
According to Table \ref{tab:RNN}, RNN with $ \tanh $ or sigmoid activation always has short memory.
In fact, this holds for general RNN processes with any bounded and continuous output and activation function; see Corollary \ref{cor:RNN-W} below.

\begin{cor} \label{cor:RNN-W}
	Suppose that the output and activation functions, $g(\cdot)$ and $\sigma(\cdot)$, at \eqref{eq:RNN-MC} are continuous and bounded. If Assumption \ref{ass:f} holds, then the RNN process is geometrically ergodic and has short memory.
\end{cor}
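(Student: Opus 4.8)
The plan is to reduce the corollary to a direct application of Theorem~\ref{thm:general}. The key observation is that when both $g$ and $\sigma$ are bounded, the transition map $\mathcal{M}_{\textsc{Rnn}}$ at \eqref{eq:MRNN} is itself a bounded function of its argument, uniformly over $\mathbb{R}^{p+q}$. Indeed, the lower block of $\mathcal{M}_{\textsc{Rnn}}(u,v)$ equals $\sigma(W_{hh} v + W_{hy} u + b_h)$, which lies in a bounded set because $\sigma$ is bounded; likewise the upper block $g(W_{zh}\sigma(\cdot)+b_z)$ is bounded because $g$ is bounded. Since both blocks take values in bounded sets regardless of the inner affine arguments, there exists a finite constant $B$, depending only on the uniform bounds of $g$ and $\sigma$, the dimensions $p,q$, and the chosen norm, such that $\norm{\mathcal{M}_{\textsc{Rnn}}(x)} \leq B$ for every $x \in \mathbb{R}^{p+q}$. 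Continuity of $g$ and $\sigma$ guarantees, in addition, that $\mathcal{M}_{\textsc{Rnn}}$ is continuous, which is the regularity implicitly underlying the Markov-chain setup of \eqref{eq:general-MC}.

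Given this uniform bound, the drift-type growth condition of Theorem~\ref{thm:general} is trivially satisfied. First I would fix any $a \in (0,1)$, say $a = 1/2$, and set $b = B$. Then for all $x$ we have $\norm{\mathcal{M}_{\textsc{Rnn}}(x)} \leq B \leq a\norm{x} + B$, since $a\norm{x}\geq 0$. Thus the hypothesis $\norm{\mathcal{M}(x)} \leq a\norm{x} + b$ of Theorem~\ref{thm:general} holds with these constants, and the conclusion that the RNN process \eqref{eq:RNN-MC} is geometrically ergodic follows immediately. By the Harris-theorem argument recalled after the definition of geometric ergodicity, geometric ergodicity yields $\gamma(k)\sim \rho^k$ as $k\to\infty$, and hence short memory in the sense of Definition~\ref{def1}.

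I do not anticipate any genuine obstacle here, as the corollary is essentially a specialization of Theorem~\ref{thm:general} to the case of bounded nonlinearities; the only point requiring a modicum of care is the dependence on the norm. Because all norms on the finite-dimensional space $\mathbb{R}^{p+q}$ are equivalent, the uniform bound on $\mathcal{M}_{\textsc{Rnn}}$ transfers from whatever norm is most convenient for the boundedness argument (for instance the sup-norm) to the particular norm $\norm{\cdot}$ appearing in Theorem~\ref{thm:general}, merely altering the value of the constant $B$. Assumption~\ref{ass:f} is carried over unchanged, being precisely the density and moment hypothesis required by Theorem~\ref{thm:general}. This argument also clarifies why the table of weight conditions imposes no constraint for the $\tanh$ and sigmoid activations: boundedness of the nonlinearity alone forces short memory, independently of the weight magnitudes, whereas the unbounded linear and ReLU activations are exactly the cases that necessitate a spectral or weight-magnitude restriction.
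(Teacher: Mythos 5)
Your proposal is correct and follows essentially the same route as the paper's own proof: the paper likewise bounds the two blocks of $\mathcal{M}_{\textsc{Rnn}}$ by constants $M_1$ and $M_2$ (in the $l_1$ norm), picks an arbitrary $a_0 \in (0,1)$ with $b = M_1 + M_2$, and invokes Theorem~\ref{thm:general}. Your explicit choice $a = 1/2$, $b = B$ and the remark on norm equivalence are just minor presentational variants of the same argument.
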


\begin{table}[t]
	\caption{Restrictions on weights such that the RNN process is geometrically ergodic.}
	\label{tab:RNN}
	\begin{center}
		\begin{small}
			\begin{tabular}{ccc}
				\toprule
				\multirow{2}{*}{\makecell{Output\\ function $g$}} & \multicolumn{2}{c}{Activation function $ \sigma $} \\
				\cmidrule[0.5pt](lr{0.125em}){2-3}%
				& {identity} or {ReLU} & {sigmoid} or {tanh}  \\
				\hline
				{identity} & {\makecell{$|w_{zh}w_{hh}|\leq a$, \\$|w_{zh}w_{hy}|\leq a$,\\$|w_{hh}|\leq a,|w_{hy}|\leq a$}}
				& No \\
				\midrule 
				{sigmoid} & {$|w_{hh}|\leq a,|w_{hy}|\leq a$} & No \\
				\midrule 
				{softmax} & {$|w_{hh}|\leq a,|w_{hy}|\leq a$} & No \\ 
				\bottomrule
			\end{tabular}
		\end{small}
	\end{center}
	\vskip -0.1in
\end{table}

We next consider the LSTM process at \eqref{eq:LSTM-MC}, and 
a detailed analysis of one-dimensional LSTM, similar to the analysis presented in Table \ref{tab:RNN}, can be referred to in the supplementary material.
The restrictions on weights such that an LSTM process is geometrically ergodic is given below.

\begin{cor} \label{cor:LSTM-W}
	The input series features $\{y^{(t-1)}\}$ are scaled to the range of $[-1, 1]$. Suppose that $ M:= \sup_{x\in B_\infty^q} \|g(W_{zh}x + b_z)\|_{l_1} <\infty $ and $ \sigma(\|W_{fh}\|_{l_\infty} + \|W_{fy}\|_{l_\infty} + \|b_f\|_{l_\infty}) \leq a$ for some $ a < 1 $, where $ B_\infty^q $ is the $ q $-dimensional $ l_\infty $-ball and $ \|W\|_{l_\infty} = \max_{1\leq i\leq m} \sum_{j = 1}^{n} |w_{ij}| $ is the matrix $ l_\infty $-norm.
	If Assumption \ref{ass:f} holds, then the LSTM process at \eqref{eq:LSTM-MC} is geometrically ergodic and has short memory.
\end{cor}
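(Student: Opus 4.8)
The plan is to derive the drift condition of Theorem~\ref{thm:general} for the LSTM transition map $\mathcal{M}_{\textsc{Lstm}}$ and then invoke that theorem directly. I would work throughout with the $l_1$ norm on the stacked state $(y\ts{t}, h\ts{t}, c\ts{t})$, so that $\|(y,h,c)\|_{l_1} = \|y\|_{l_1}+\|h\|_{l_1}+\|c\|_{l_1}$, matching the $l_1$/$l_\infty$ quantities appearing in the hypotheses. The first observation is that the gate outputs are automatically bounded: since $\sigma$ and $\tanh$ are elementwise and take values in $(0,1)$ and $(-1,1)$ respectively, every component of $f\ts{t}, i\ts{t}, o\ts{t}$ lies in $(0,1)$ and every component of $\tilde c\ts{t}$ lies in $(-1,1)$. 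Consequently $h\ts{t} = o\ts{t}\odot\tanh(c\ts{t})$ has $\|h\ts{t}\|_{l_\infty}<1$, i.e. $h\ts{t}\in B_\infty^q$, irrespective of the input; this is exactly why the hypothesis on $g$ is phrased as a supremum over $B_\infty^q$.

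The crux is a one-sided contraction on the cell state. From $c\ts{t}=i\ts{t}\odot\tilde c\ts{t}+f\ts{t}\odot c\ts{t-1}$ and the componentwise bounds above, each coordinate satisfies $|c_i\ts{t}|\le 1 + f_i\ts{t}\,|c_i\ts{t-1}|$. I would then bound the forget gate uniformly: because the inputs $y\ts{t-1}$ are scaled into $[-1,1]$ (so $\|y\ts{t-1}\|_{l_\infty}\le 1$) and $h\ts{t-1}\in B_\infty^q$ (so $\|h\ts{t-1}\|_{l_\infty}\le 1$), the pre-activation of the forget gate obeys, coordinatewise, $|(W_{fh}h\ts{t-1}+W_{fy}y\ts{t-1}+b_f)_i|\le \|W_{fh}\|_{l_\infty}+\|W_{fy}\|_{l_\infty}+\|b_f\|_{l_\infty}$. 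Since $\sigma$ is increasing, this gives $f_i\ts{t}\le \sigma(\|W_{fh}\|_{l_\infty}+\|W_{fy}\|_{l_\infty}+\|b_f\|_{l_\infty})\le a$ for every $i$. Summing the coordinate bound yields the cell-state recursion $\|c\ts{t}\|_{l_1}\le q + a\,\|c\ts{t-1}\|_{l_1}$, which is the only place a genuine contraction factor $a<1$ appears.

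It remains to collect the blockwise bounds. The output block satisfies $\|z\ts{t}\|_{l_1}=\|g(W_{zh}h\ts{t}+b_z)\|_{l_1}\le M$ by the hypothesis on $g$ together with $h\ts{t}\in B_\infty^q$, and $\|h\ts{t}\|_{l_1}\le q$ since each coordinate is bounded by $1$. Combining these with the cell-state recursion, and using $\|c\ts{t-1}\|_{l_1}\le \|(y\ts{t-1},h\ts{t-1},c\ts{t-1})\|_{l_1}$, gives
\begin{equation*}
\|\mathcal{M}_{\textsc{Lstm}}(y\ts{t-1},h\ts{t-1},c\ts{t-1})\|_{l_1}\le a\,\|(y\ts{t-1},h\ts{t-1},c\ts{t-1})\|_{l_1}+(M+2q),
\end{equation*}
which is precisely the drift condition of Theorem~\ref{thm:general} with constant $b=M+2q$. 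Geometric ergodicity and short memory then follow from Theorem~\ref{thm:general}, and Assumption~\ref{ass:f} supplies the remaining regularity on $\varepsilon\ts{t}$.

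I expect the main obstacle to be bookkeeping about which state components actually carry the contraction: only $c$ is potentially unbounded, whereas $y$ is bounded by the scaling hypothesis and $h$ is structurally confined to $B_\infty^q$, so all of the contraction factor $a$ must be routed through the cell-state term while the bounded blocks are absorbed into the additive constant. A secondary subtlety is that the bound $h\ts{t-1}\in B_\infty^q$ is guaranteed by the LSTM structure only after one transition, so strictly one should verify the drift condition on the effective state region the chain enters (equivalently, treat $h$ as restricted to $B_\infty^q$), rather than on an unconstrained $h\in\mathbb{R}^q$; reconciling this with the scaling assumption on $y$ under the additive-noise DGP is where the argument needs the most care.
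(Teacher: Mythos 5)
Your proof is correct and takes essentially the same route as the paper's: bound the output block by $M$ (using that $h\ts{t}\in B_\infty^q$ structurally), the hidden block by $q$, and route the contraction factor $a$ entirely through the cell state via the uniform forget-gate bound $\|f\|_{l_\infty}\leq \sigma(\|W_{fh}\|_{l_\infty}+\|W_{fy}\|_{l_\infty}+\|b_f\|_{l_\infty})\leq a$, yielding the drift inequality $\|\mathcal{M}_{\textsc{Lstm}}(u,v,w)\|_{l_1}\leq a\|(u',v',w')'\|_{l_1}+M+2q$ and then invoking Theorem~\ref{thm:general}. The subtlety you flag at the end---that the drift condition is only verified on the effective region where $u$ and $v$ are $l_\infty$-bounded, which sits uneasily with the unbounded additive noise in the DGP---is present in the paper's proof as well, which simply asserts $u\in B_\infty^p$ and $v\in B_\infty^q$ without further reconciliation.
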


It is worthy to point out that the condition of $ M <\infty $ is satisfied by many commonly used output functions, including linear function, ReLU, sigmoid or $\tanh$. 
From the conditions in Corollary \ref{cor:LSTM-W}, we may conclude that the forget gate mainly affects the memory property of an LSTM.

\subsection{Long Memory Network Process} \label{sec:lmnet}

In the previous subsection, we verify that the two commonly used recurrent networks, RNN and LSTM, are both short memory when there is no exogenous input. 
However, it is very common of $\{x^{(t)}\}$ to include exogenous inputs, and the long-range dependence in $ \{x\ts{t} \} $ will confound the memory property of $ \{y\ts{t} \} $. It is then misleading to check the long-range dependence of a network by verifying that of the corresponding network process.

In the literature of conditional heteroscedastic time series models, there also exists vast interest in long-range dependence; see, e.g., the fractionally integrated GARCH (FIGARCH) models \citep{davidson2004moment}, hyperbolic GARCH (HYGARCH) models \citep{li2015new}, etc.
However, for a stationary conditional heteroscedastic process, its \vphantom{squared }autocovariance function is always summable, i.e. it always has short memory in terms of Definition \ref{def1}.

Consider a stationary conditional heteroscedastic model, $y_t=\eta_t\sqrt{h_t}$ and $ h_t = \omega + \sum_{k=1}^{\infty} \theta_k y_{t-k}^2 $, where $\{\eta_t\}$ is a sequence of $i.i.d.$ random variables.
\citet{davidson2004moment} defined that it has long memory if the coefficients $\{\theta_k\}$ have hyperbolic decay.
This definition has been widely used in the literature. Moreover, the commonly used fractional ARIMA process also has the form of 
$ y\ts{t} = \sum_{k=1}^{\infty} a_k y\ts{t-k} + \varepsilon\ts{t} $, where $ a_k \sim k^{-d-1} $ as $k\rightarrow\infty$.
This motivates us to propose a new definition of long memory for a network process, 
\begin{equation}\label{longmemoryprocess}
y\ts{t} = \sum_{k=0}^{\infty} A_k x\ts{t-k} + \varepsilon\ts{t},
\end{equation}
where $y^{(t)}\in\mathbb{R}^p$ is the generated target, $x^{(t)}\in\mathbb{R}^q$ is the input, $q\times p$ coefficient matrices $A_k=\{(A_k)_{ij}\}$ with $1\leq i \leq p$ and $1\leq j\leq q$ and $\{\varepsilon^{(t)}\}$ are $i.i.d.$ random vectors.

\begin{definition} \label{def:mem-net}
	The network process at \eqref{longmemoryprocess} has long memory if there exist $1\leq i \leq p$ and $1\leq j\leq q$ such that
	\begin{equation*}
	(A_k)_{ij} \sim k^{-d-1}
	\end{equation*}
	as $k\rightarrow\infty$, where $d \in (0, 0.5)$ is the memory parameter.
\end{definition}

The coefficients $ \{A_k\}_{k\in \mathbb{N}} $ reflect the dependence of $ y\ts{t} $ on $ \{ x\ts{t} \} $. 
If $ (A_k)_{ij} $ decays slowly at a polynomial rate, we can conclude that $ y\ts{t}_i $ is long-term dependent on $ x\ts{t}_j $ according to Definition \ref{def1}.
This definition is consistent with the visual method used by Lin et al. \yrcite{lin1996learning} since the Jacobian $ J(t, k) = {\partial y\ts{t}}/{\partial x\ts{t-k}} $ equals $ A_k $ and exhibits slow decay under Definition \ref{def:mem-net}.

This definition admits extensions to nonlinear networks. 
Firstly, we can linearize a network by letting all output and activation functions be the identity function, and we call the resulting network process as its \textit{linear network process}. 
A network can handle data with long-range dependence if its linear network process has long memory in terms of Definition \ref{def:mem-net}. 
Note that, for some networks such as LSTM, their linear network processes are still nonlinear. 
Moreover, nonlinear networks can often be well approximated by linear networks via polynomial approximations and introducing the powers of $ x\ts{t} $ into the input; see Yu et al. \yrcite{yu2017long} as an example. 
Lastly, we may resort to a first-order approximation $ y\ts{t} \approx \sum_{k=0}^{\infty} J(t, k) \, x\ts{t-k} + \varepsilon\ts{t} $ as in Lin et al. \yrcite{lin1996learning} and check the decay pattern of the Jacobians.

\section{Long Memory Recurrent Networks}
Motivated by the fractionally integrated process at \eqref{eq:multi-B} and the new definition of long memory in Section 2.4, we propose a long memory filter structure that can be added to neural networks to enable modeling long-term dependence. 
This long memory filter can be viewed as a special attention mechanism with only a few memory parameters and a guaranteed memory elongation effect when active.
In Sections \ref{sec:MRNN} and \ref{sec:MLSTM}, we modify the RNN and the LSTM network by the proposed long memory filter structure, respectively.

\subsection{Memory-augmented RNN (MRNN)} \label{sec:MRNN}

Long memory pattern is introduced via a memory filter, 
\begin{equation}\label{filter}
F(x\ts{t};d)=((I-\mathcal{B})^{d}-I) x\ts{t}
\end{equation}
for $x^{(t)}\in\mathbb{R}^{p_x}$ with $p_x$ being the dimension of inputs, where $d=(d_1,\ldots,d_{p_x})^{\prime}$; see \eqref{eq:multi-B} for details.
The memory filter can be viewed as soft attention to a reasonably sized memory with only a few memory parameters of $d_i$s.
Note that, from \eqref{eq:1-Bd}, the $i$th element of the memory filter is $ F(x\ts{t};d)_i = ((1-B)^{d_i}-1) x_i\ts{t} = \sum_{j=1}^{\infty} w_j(d_i) x\ts{t-j+1}_i $, where $ w_j(d) = {\Gamma(d+j)}/{[j!\Gamma(d)]} = \prod_{i=0}^{j-1} ({i-d})/({i+1})$.
To implement this model, we truncate this infinite summation at lag $ K$.
In our experiments, we choose $ K = 100$ by default, since, taking $d=0.4$ as an example, 
$ w_{100}(0.4) \approx -4.27 \times 10^{-4}$ 
is already small enough.

\begin{figure}[ht]
	\centering
	\includegraphics[width=\columnwidth]{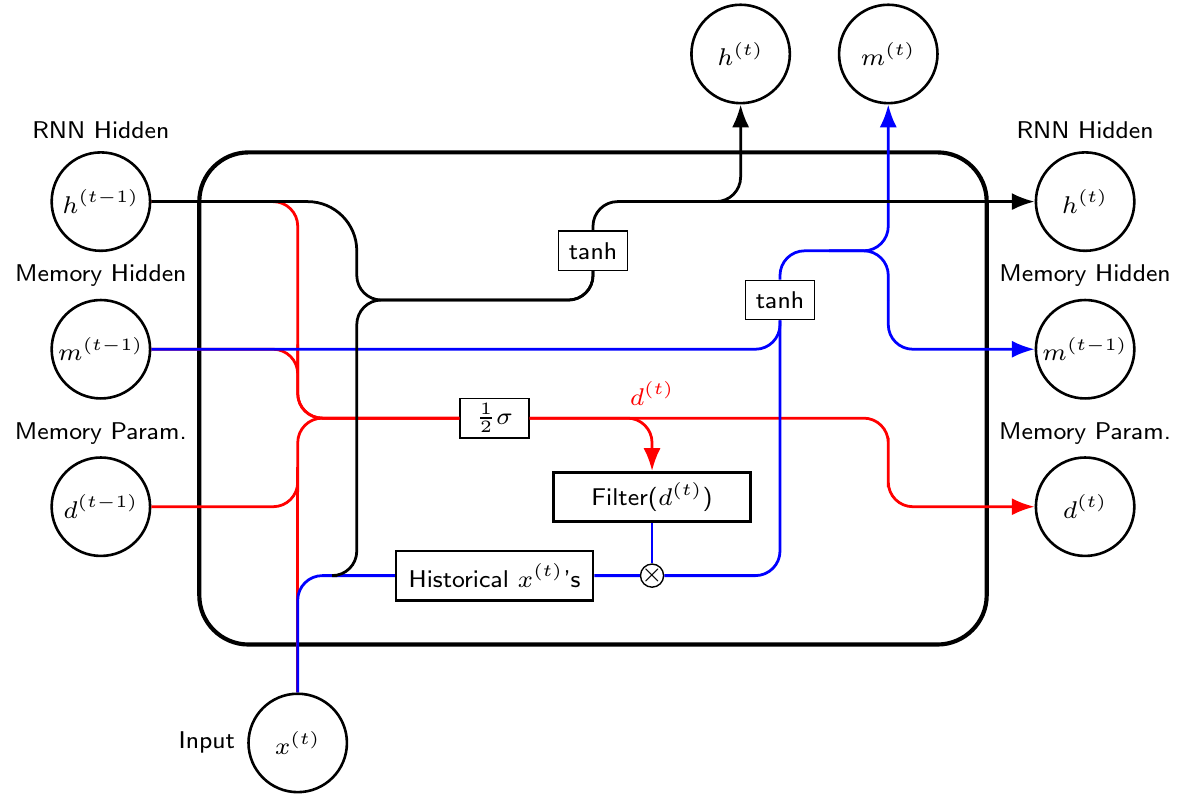}
	\vskip -0.05 in
	\caption{The MRNN cell structure.}
	\label{fig:MRNNcell}
\end{figure}

In MRNN, we introduce a long memory hidden unit 
$$ m\ts{t} = \tanh(W_{mm} m\ts{t-1} + W_{mf} F(x\ts{t};d) + b\ts{m}),$$ 
which shares a similar structure with $ h\ts{t} $, while the input term is replaced with the filter $ F(x\ts{t};d) $.
The unit $ m\ts{t} $ works in parallel with the traditional hidden unit $ h\ts{t} $ and takes responsibility for modeling the long memory pattern in time series.
The underlying network can then be designed,
\begin{equation*} \label{eq:MRNN-DGP}
z\ts{t}  = g(W_{zh} h\ts{t} + W_{zm} m\ts{t} + b_z) + \varepsilon\ts{t}, \text{ \quad for } t \in \mathbb{Z} 
\end{equation*}
where $ z\ts{t} \in \mathbb{R}^{p_z} $ with $p_z$ being the dimension of outputs, and
$ h\ts{t} = (h_1\ts{t}, \cdots, h_q\ts{t})' \in \mathbb{R}^q $ is the same as the hidden states in RNN. 

For the memory parameter $d=(d_1,\ldots,d_{p_x})^{\prime}$, we restrict $0<d_i<0.5$ such that the fractional integration can provide long memory. Moreover, to make the design more general, we also let the memory parameter $d$ depend on other variables, and hence the notation $d^{(t)}$.
As a result, the procedure of MRNN is given below.
\begin{equation}
\begin{cases}
l\ts{t} = \|y\ts{t} - z\ts{t}\|^2 \\
z\ts{t} = g(W_{zh} h\ts{t} + W_{zm} m\ts{t} + b_z) \\
h\ts{t} = \tanh(W_{hh} h\ts{t-1} + W_{hx} x\ts{t} + b_h) \\
F(x\ts{t};d\ts{t})_i = \sum_{j=1}^{K} w_j(d_i\ts{t}) x_i\ts{t-j+1} \\
d\ts{t} = \frac{1}{2} \sigma( W_{d} [d\ts{t-1}, h\ts{t-1}, m\ts{t-1}, x\ts{t}]+ b_d) \\
m\ts{t} = \tanh(W_{m} [m\ts{t-1}, F(x\ts{t};d\ts{t})] + b_m)
\end{cases}
\label{proc:mrnn}
\end{equation}
for $ i\in \{1, ..., p_x\}, t \in \{1, ..., T\}$, where $h\ts{t}, m\ts{t} \in \mathbb{R}^q$, $d\ts{t}, x\ts{t} \in \mathbb{R}^{p_x}$, and $ \sigma $ is the sigmoid activation function.


Figure \ref{fig:MRNNcell} gives a graphical representation of the MRNN cell structure, imitating the style of Olah \yrcite{olah2015understanding}.
The memory filter \textit{Filter$(d\ts{t})$} refers to \eqref{filter}, and \textit{Historical $ x\ts{t}$'s} are $ (x\ts{t}, x\ts{t-1}, ..., x\ts{t-K+1}) $, where we treat $ x\ts{s} = 0 $ for $ s\leq 0 $. 

For the network with memory parameters constant over time points $t$, we refer to it as the MRNNF model, and it can be implemented by fixing $ W_{d} = 0 $ in the update equation \eqref{proc:mrnn}.

\begin{thm} \label{thm:MRNN}
	In terms of Definition \ref{def:mem-net}, the MRNNF has the capability of handling long-range dependence data, while the RNN cannot.
\end{thm}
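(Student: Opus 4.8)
The plan is to exploit the linearization device introduced right after Definition \ref{def:mem-net}: a recurrent network can handle long-range dependence precisely when its \emph{linear network process} admits a moving-average representation $y\ts{t} = \sum_{k=0}^{\infty} A_k x\ts{t-k} + \varepsilon\ts{t}$ whose coefficient matrices $\{A_k\}$ satisfy $(A_k)_{ij}\sim k^{-d-1}$ for some $i,j$ and some $d\in(0,0.5)$. The whole statement therefore reduces to writing down $A_k$ for the two linearized processes and comparing their tails: an exponential tail rules out long memory, while a polynomial tail of the prescribed order certifies it. I would treat the two halves separately, beginning with the negative claim for the RNN.

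For the RNN (viewed, as in \eqref{proc:mrnn} with the $m$ channel removed, as a network with input $x\ts{t}$), I would linearize by taking $g$ and $\sigma$ to be the identity, giving $h\ts{t} = W_{hh}h\ts{t-1} + W_{hx}x\ts{t} + b_h$ and $z\ts{t} = W_{zh}h\ts{t} + b_z$. Unrolling yields $A_k = W_{zh}W_{hh}^{k}W_{hx}$, up to an additive constant absorbed into the intercept. The process is only well defined (stationary) when the underlying linear chain is geometrically ergodic, so Theorem \ref{thm:general2} forces $\rho(W_{hh})<1$. By the Jordan decomposition, every entry of $W_{hh}^{k}$ is a finite sum of terms $p(k)\lambda^{k}$ with $|\lambda|<1$ and $p$ a polynomial, so $\norm{A_k}$ decays geometrically, faster than any polynomial. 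Such an exponential-polynomial sequence can never be asymptotically equivalent to $k^{-d-1}$ with $d\in(0,0.5)$, so no pair $(i,j)$ satisfies Definition \ref{def:mem-net} and the RNN has short memory.

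For the MRNNF it suffices to exhibit \emph{one} admissible weight configuration whose linear network process has long memory. Linearizing \eqref{proc:mrnn} with $W_{d}=0$ (so $d$ is constant) and $g,\tanh$ replaced by the identity adds the channel $m\ts{t}=W_{mm}m\ts{t-1}+W_{mf}F(x\ts{t};d)+b_m$, feeding $z\ts{t}$ through $W_{zm}$. The filter is the decisive ingredient: by \eqref{eq:1-Bd} its $i$th component weights $x\ts{t-k}_i$ by $w_{k+1}(d_i)\sim k^{-d_i-1}$, a genuine polynomial tail whenever $d_i\in(0,0.5)$. Taking $W_{mm}=0$ collapses the channel to $m\ts{t}=W_{mf}F(x\ts{t};d)+b_m$, so that $A_k = W_{zh}W_{hh}^{k}W_{hx} + W_{zm}W_{mf}\,\mathrm{diag}\bigl(w_{k+1}(d_1),\dots,w_{k+1}(d_{p_x})\bigr)$. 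The first term is exponentially negligible, while the $(i,j)$ entry of the second is $[W_{zm}W_{mf}]_{ij}\,w_{k+1}(d_j)$; choosing the weights so that $[W_{zm}W_{mf}]_{ij}\neq0$ for some $i$ and some $j$ with $d_j\in(0,0.5)$ gives $(A_k)_{ij}\sim c\,k^{-d_j-1}$ with $c\neq0$, exactly the long-memory condition.

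The main obstacle is keeping the argument honest on two fronts. First, well-definedness: I must confirm the chosen weights give absolutely summable $\{A_k\}$ so the moving-average process exists under a stationary input, which holds because $\sum_k |w_{k+1}(d_i)|<\infty$ for $d_i>0$ (as $d_i+1>1$) together with $\rho(W_{hh})<1$. Second, if one wants long memory to persist for generic $W_{mm}\neq0$ rather than the convenient $W_{mm}=0$, the $m$-path coefficient becomes the convolution $\sum_{l\ge0}W_{mm}^{l}W_{mf}\,w_{k-l+1}(d_i)$ of a geometric sequence with a polynomially decaying one; the delicate point is an elementary convolution lemma showing this remains $\sim k^{-d_i-1}$, with the leading constant multiplied by $(I-W_{mm})^{-1}$. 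Since the theorem only asserts \emph{capability}, the choice $W_{mm}=0$ already closes the proof, so this convolution estimate is the sole genuinely technical step and I would isolate it as a lemma.
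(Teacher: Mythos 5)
Your proposal is correct, and its skeleton is the same as the paper's proof: linearize both networks, write the moving-average coefficients as the sum of an $h$-channel term $A_k = W_{zh}W_{hh}^{k}W_{hx}$ (exponentially decaying, which disposes of the RNN half) and, for MRNNF, an $m$-channel term driven by the filter $(I-\mathcal{B})^{d}-I$ (polynomially decaying), then compare tails. The one genuine difference is the treatment of the $m$-channel. The paper keeps $W_{mm}$ general, writes $\sum_{k} D_k x\ts{t-k} = W_{zm}(I-W_{mm}\mathcal{B})^{-1}W_{mf}\left((I-\mathcal{B})^{d}-I\right)x\ts{t}$, and simply asserts that ``the decay of $D_k$ is dominated by $(I-\mathcal{B})^{d}-I$''; that assertion is precisely the geometric-by-polynomial convolution estimate you isolate as a lemma, and the paper never proves it. Your choice $W_{mm}=0$ sidesteps that estimate entirely and yields a fully rigorous proof of the capability claim, at the cost of exhibiting only one weight configuration --- which, as you note, is all that ``capability'' requires; the paper's version covers generic $W_{mm}$ but is informal at exactly that step. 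You are also more careful than the paper on two minor points: you justify the expansion $(I-W_{hh}\mathcal{B})^{-1}=\sum_{k}W_{hh}^{k}\mathcal{B}^{k}$ by requiring $\rho(W_{hh})<1$ (the paper uses it silently), and you make explicit that some entry of $W_{zm}W_{mf}$ must be nonzero, with $d_j\in(0,0.5)$, for the polynomial tail to survive --- a genericity condition the paper leaves implicit. In short: same decomposition and same key insight, with your argument trading the paper's generality for rigor at the single step where the paper hand-waves.
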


Technical proof is provided in the supplementary material.

\subsection{Memory-augmented LSTM (MLSTM)} \label{sec:MLSTM}

From Corollary \ref{cor:LSTM-W}, we know that the forget gates determine the memory property of the original LSTM. The update equation of cell states has a varying coefficient vector AR(1) form, 
\[
c\ts{t}-f\ts{t} \odot c\ts{t-1} = i\ts{t} \odot \tilde{c}\ts{t}, 
\]
which has short memory when the coefficients $f^{(t)}$s are smaller than one. 
Thus, we propose to revise the cell states of LSTM by adding the long memory filter to it
\[
(I-\mathcal{B})^{d} c\ts{t} = i\ts{t} \odot \tilde{c}\ts{t},
\]
where the memory parameter $d$ can depend on other variables as for the MRNN in the previous subsection.
The MLSTM cell structure is shown in Figure \ref{fig:MLSTMcell}. The revised cell states can be viewed as paying soft attention to past cell states controlled by only a few memory parameters.

\begin{figure}[ht]
	\centering
	\includegraphics[width=\columnwidth]{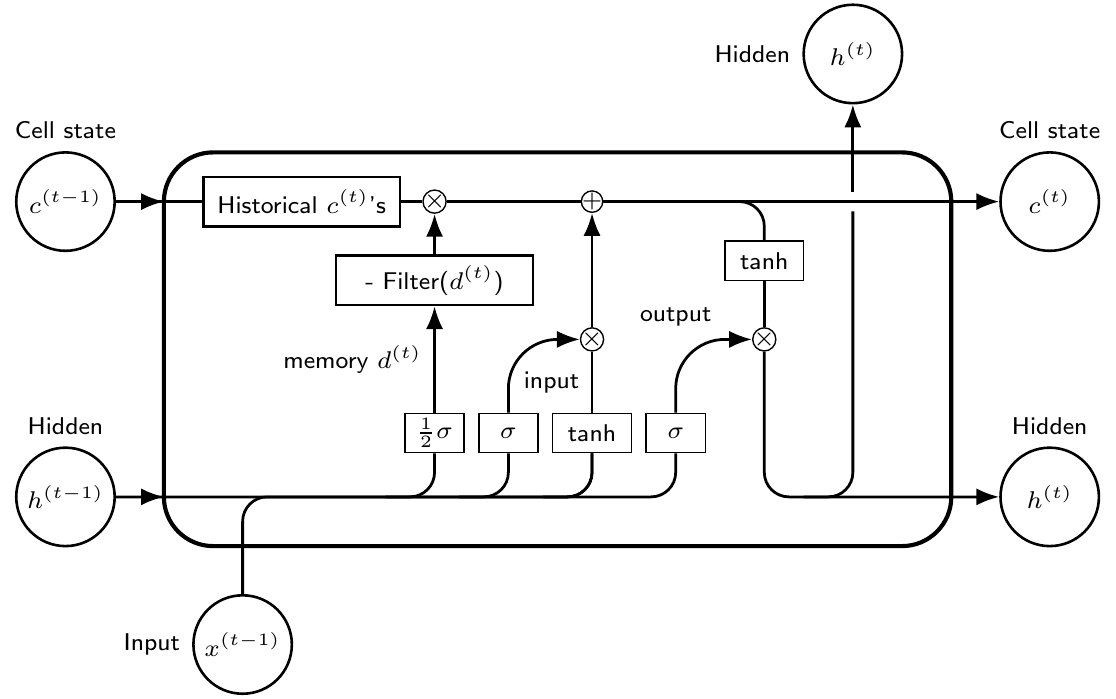}
	\vskip -0.05 in
	\caption{The MLSTM cell structure.}
	\label{fig:MLSTMcell}
\end{figure}

The underlying network can then be designed as
\begin{equation} \label{eq:MLSTM-DGP}
z\ts{t}  = g(W_{zh} h\ts{t} + b_z) + \varepsilon\ts{t}, \text{ \quad for } t \in \mathbb{Z},
\end{equation}
where the hidden unit $ h\ts{t} $ is produced by the MLSTM cell. 
We rename the forget gate as the memory gate
and modify the update equations as
\begin{equation} \label{eq:mlstmcell}
\begin{cases}
d\ts{t} = \frac{1}{2}\sigma(W_{d} [d\ts{t-1}, h\ts{t-1}, x\ts{t}] + b_d) \\
i\ts{t} = \sigma(W_{ih} h\ts{t-1} + W_{ix} x\ts{t} + b_i) \\
o\ts{t} = \sigma(W_{oh} h\ts{t-1} + W_{ox} x\ts{t} + b_o) \\
\tilde{c}\ts{t} = \tanh (W_{ch} h\ts{t-1} + W_{cx} x\ts{t} + b_c) \\
(I-\mathcal{B})^{d} c\ts{t} = i\ts{t} \odot \tilde{c}\ts{t} \hspace{2mm}\text{with}\hspace{2mm} d=d\ts{t}\\
h\ts{t} = o\ts{t} \odot \tanh(c\ts{t})	
\end{cases}, 
\end{equation}
for $t \in \{1, ..., T\}$,
where 
$ (I-\mathcal{B})^{d} $ is defined as in \eqref{eq:multi-B}.

As for the MLSTM, to implement this model, we truncate the infinite summation $ (I-\mathcal{B})^{d} c\ts{t} $ at lag $ K $. The procedure related to the MLSTM cell states is implemented as 
\begin{equation} \label{proc:MLSTM-CS}
c_i\ts{t} = - \sum_{j=1}^{K} w_j(d_i\ts{t}) c_i\ts{t-j} + i\ts{t} \tilde{c}\ts{t},
\end{equation}
where $c_i\ts{t}$ is the $i$-th element of $c\ts{t}$, $ w_j(d) = {\Gamma(d+j)}/[{j!\Gamma(d)}] = \prod_{i=0}^{j-1} ({i-d})/({i+1})$.
We remark that the negative sign before the summation in equation \eqref{proc:MLSTM-CS} is introduced by the definition of $ (I - \mathcal{B})^{d} $. In MRNN, we let the negative sign be absorbed by the weight matrix $ W_{mf} $ for elegance.

Note that the cell state $\{c^{(t)}\}$ has long memory in terms of Definition \ref{def:mem-net}. In the meanwhile, due to the gating mechanism, neither LSTM nor MLSTM can be reasonably simplified to a linear network, i.e. their linear network processes are both nonlinear. However, if we assume that the gates are learnable constants independent of the hidden unit and the inputs, we can prove that the constant-gates-LSTM does not have long memory, while the constant-gates-MLSTM is a long memory network according to Definition \ref{def:mem-net}. We defer the formal statement of this auxiliary result and its proof to the supplementary material. 

Similar to MRNNF, we refer MLSTMF to the case with constant memory parameter over time $t$, and it can be implemented via fixing $ W_{d} = 0 $ in the update equation \eqref{eq:mlstmcell}.

\section{Experiments}
This section reports several numerical experiments. 
We first compare the models using time series forecasting tasks on four long memory datasets and one short memory dataset. 
Then, we investigate the effect of the model parameter $K$ on the forecasting performance.
Lastly, we apply the proposed models to two sentiment analysis tasks.

All the networks are implemented in \texttt{PyTorch}. 
We use the Adam algorithm with learning rate $ 0.01 $ for optimization. 
The optimization is stopped when the loss function drops by less than $ 10^{-5} $ or has been increasing for 100 steps or has reached 1000 steps in total.
The learned model is chosen to be the one with the smallest loss on the validation set.

Considering the non-convexity of the optimization, we initialize with 100 different random seeds and arrive at 100 different trained models for each model. 
We refer to the distribution of these 100 results as the \textit{overall performance}, and best of them as the \textit{best performance}. 
The overall performance reflects what we can expect from a locally optimal model, and the best performance is closer to the outcome of a globally optimal model.

\subsection{Long Memory Datasets} \label{sec:exp-long}

We compare our models with the baselines on one synthetic dataset and three real datasets. 
We split the datasets into training, validation and test sets, and report their lengths below using notation $ (n_{train} + n_{val} + n_{test})$. 
MSE is the target loss function for training.
We perform one-step rolling forecasts on the test set and calculate prediction RMSE, MAE, and MAPE.

\textbf{ARFIMA series} 
We generated a series of length $4001$ $(2000+1200+800)$ using the model $ (1-0.7B + 0.4B^2)(1-B)^{0.4} Y_t = (1 - 0.2B) \varepsilon_t $ with obvious long memory effect.

\textbf{Dow Jones Industrial Average (DJI)} 
The raw dataset contains DJI daily closing prices from 2000 to 2019 obtained from Yahoo Finance. We convert it to absolute log return for $5030$ $(2500+1500+1029)$ days in order to model the long memory effect in volatility.

\textbf{Metro interstate traffic volume} 
The raw dataset contains hourly Interstate 94 Westbound traffic volume for MN DoT ATR station 301, roughly midway between Minneapolis and St Paul, MN, obtained from MN Department of Transportation \cite{UCItraffic}. We convert it to de-seasoned daily data with length $1860$ $(1400+200+259)$.

\textbf{Tree ring} 
Dataset contains $4351$ $(2500+1000+850)$ tree ring measures of a pine from Indian Garden, Nevada Gt Basin obtained from R package \texttt{tsdl} \cite{tsdl}.

We visualize the long memory in the datasets via autocorrelation plots (Figure \ref{fig:acfs}), where $ r_0 = 1 $ is cropped-off. 
Full ACF plots are provided in the supplementary material.

\begin{figure}[ht]
	\centering
	\includegraphics[width=\columnwidth]{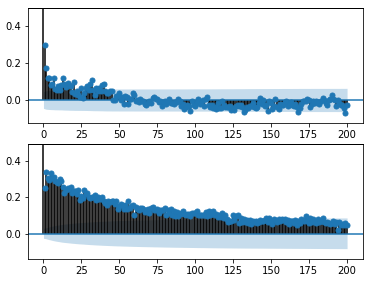}
	\vskip -0.10 in
	\caption{Autocorrelation plot of traffic dataset (top) and DJI dataset (bottom).}
	\label{fig:acfs}
\end{figure}

We compare the following models: 
0. ARFIMA;
1. vanilla RNN (RNN);
2. two-lane RNN with past $K$ values as input (RNN2);
3. Recurrent weighted average network (RWA);
4. MIxed hiSTory RNNs (MIST);
5. MRNN with homogeneous memory parameter $ d $ (MRNNF);
6. MRNN with dynamic $ d\ts{t} $ (MRNN);
7. vanilla LSTM (LSTM);
8. MLSTM with homogeneous $ d $ (MLSTMF); and
9. MLSTM with dynamic $ d\ts{t} $ (MLSTM).

\begin{table}[t]
	\caption{Overall performance in terms of RMSE. Average RMSE and the standard deviation (in brackets) are reported. The best result is highlighted in \textbf{bold}.}
	\label{tab:average}
	\vskip 0.15in
	\begin{center}
		\begin{small}
			\begin{tabular}{lcccc}
				\toprule
				& ARFIMA & DJI (x100)& Traffic & Tree \\
				\cmidrule[0.5pt]{1-5}
				RNN 	& \makecell{1.1620\\ (0.1980)} & \makecell{0.2605\\ (0.0171)} & \makecell{336.44\\ (10.401)} & \makecell{0.2871\\ (0.0086)} \\
				RNN2 	& \makecell{1.1630\\ (0.1820)} & \makecell{0.2521\\ (0.0112)} & \makecell{336.32\\ (10.182)} & \makecell{0.2855\\ (0.0077)}\\
				RWA 	& \makecell{1.6840\\ (0.0050)} 
				& \makecell{0.2689\\ (0.0095)} & \makecell{346.62\\ ({1.410})} & \makecell{0.3048\\ (0.0001)}\\
				MIST 	& \makecell{1.1390\\ (0.1832)} & \makecell{{0.2604}\\ (0.0154)} & \makecell{{358.09}\\ (16.270)} & \makecell{0.2883\\ (0.0091)}\\
				MRNNF 	& \makecell{1.1010\\ (0.1000)} & \makecell{\textbf{0.2472}\\ (0.0109)} & \makecell{\textbf{333.36}\\ (8.453)} & \makecell{0.2822\\ (0.0048)}\\
				MRNN 	& \makecell{\textbf{1.0880}\\ (0.1140)} 
				& \makecell{0.2487\\ (0.0105)} & \makecell{333.72\\ (10.157)} & \makecell{\textbf{0.2818}\\ (0.0053)}\\
				\cmidrule[0.5pt]{1-5}
				LSTM 	& \makecell{1.1340\\ (0.1200)} & \makecell{0.2492\\ (0.0128)} & \makecell{337.60\\ (8.146)} & \makecell{0.2833\\ (0.0070)}\\
				MLSTMF 	& \makecell{1.1580\\ (0.1660)} & \makecell{0.2540\\ (0.0139)} & \makecell{337.78\\ (9.020)} & \makecell{0.2859\\ (0.0082)}\\
				MLSTM 	& \makecell{1.1490\\ (0.1660)} & \makecell{0.2531\\ (0.0130)} & \makecell{337.83\\ (9.440)} & \makecell{0.2859\\ (0.0083)}\\
				\bottomrule 
			\end{tabular}
		\end{small}
	\end{center}
	\vskip -0.15in
\end{table}

\begin{figure}[ht]
	\centering
	\includegraphics[width=0.95\columnwidth]{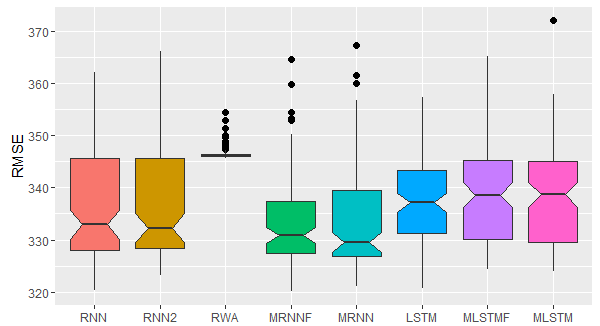}
	\caption{Boxplot of RMSE for 100 different initializations. Dataset: traffic.}
	\label{fig:traffic-RMSE-box}
\end{figure}

In Table \ref{tab:average}, we report overall performance of one-step forecasting regarding RMSE. MAE and MAPE are reported in the supplementary material.
Boxplots are generated to give a better picture for comparison. We use the traffic dataset as an example and put boxplots for other datasets into the supplementary material. 
We can see that MRNN and MRNNF have a smaller average RMSE and smaller quantiles compared with others. 
MLSTM(F) does not have an obvious advantage over LSTM in terms of RMSE, and we suspect that this is due to the difficulty in training MLSTM(F). 
RWA is very stable with respect to initialization but does not have a competitive RMSE.
The \texttt{arfima} routine in \texttt{R} automatically searches for an optimal global solution, and thus ARFIMA appears only in the comparison of the best performance.

Thanks to the anonymous reviewers' comments, we present two-sample $t$-tests to compare the models more rigorously. Consider the null and alternative hypotheses
$ H_0 $: mean(RMSE(Model)) $\geq$ mean(RMSE(Benchmark)) vs. 
$ H_1 $: mean(RMSE(Model)) $<$ mean(RMSE(Benchmark)).
MRNN is significantly better than RNN at 5\% significance level on all datasets, and it is significantly better than LSTM on all datasets except for DJI.

In Table \ref{tab:best}, we report best performance of one-step forecasting regarding RMSE. Results in terms of MAE and MAPE are reported in the supplementary material. 
We can see that the best performance of MRNNF and MRNN are better than others on ARFIMA, traffic and tree datasets, while remains competent on DJI. 

\begin{table}[t]
	\caption{Best performance in terms of RMSE.}
	\label{tab:best}
	\vskip 0.15in
	\centering
	\begin{small}
		\begin{tabular}{lcccc}
			\toprule
			& ARFIMA & DJI (x100)& Traffic & Tree \\
			\cmidrule[0.5pt]{1-5} 
			ARFIMA 	& 1.0260 & 0.2468 	 & 327.47  & 0.2773 \\
			\cmidrule[0.5pt]{1-5}
			RNN 	& 1.0452 & \textbf{0.2390} & 320.29  & 0.2786 \\
			RNN2 	& 1.0232 & 0.2402	 & 323.15  & 0.2784 \\
			RWA 	& 1.6742 & 0.2631	 & 345.58  & 0.3047 \\
			MIST    & 1.0232 & 0.2401 	 & 337.49  & 0.2772 \\
			MRNNF 	& 1.0230 & 0.2394 	 & \textbf{320.09} & \textbf{0.2769}\\
			MRNN 	& \textbf{1.0208} 	 & 0.2395 & 321.03  & 0.2770 \\
			\cmidrule[0.5pt]{1-5} 
			LSTM 	& 1.0272 & 0.2396	 & 320.79  & 0.2771 \\
			MLSTMF	& 1.0280 & 0.2413	 & 324.37  & 0.2773 \\
			MLSTM 	& 1.0272 & 0.2412	 & 324.00  & 0.2772 \\
			\bottomrule 
		\end{tabular}
	\end{small}
	\vskip -0.15in
\end{table}

\subsection{Short Memory Dataset}
For datasets without long memory effect or with long memory only in certain dimensions, the performance of our proposed models does not deteriorate. 
This claim is supported by an experiment on a synthetic dataset generated by RNN.

We generated a sequence of length 4001 ($2000+1200+800$) using model \eqref{eq:RNN-MC}, which does not have long memory according to Corollary \ref{cor:RNN-W}. We refer to this synthetic dataset as the RNN dataset. The boxplots of error measures are presented in the supplementary material. From the boxplots, we can see that the performance of MRNN(F) and MLSTM(F) is comparable with that of the true model RNN, except that the variation of the error measures is a bit larger.

\subsection{Model Parameter $K$}
We further explore more choices of $K$ using the long memory datasets in section \ref{sec:exp-long}. 
Settings for MSLTM and LSTM are kept the same except for the hyperparameter $K$.
We compare the prediction performance of the proposed models with $ K = 25, 50, 75$ or $100$. 
For MRNN and MRNNF, the prediction is generally better for a larger $K$, and they have smaller average RMSE than all the baseline models regardless of the choice of $K$. Interestingly, the performance of MLSTM and MLSTMF gets better when $K$ becomes smaller, and with $K=25$, they can outperform LSTM on ARFIMA and traffic datasets. Thus, we recommend a large $K$ for MRNN and MRNNF models, while for the more complicated MLSTM models, $K$ deserves more investigation to balance expressiveness and optimization. Detailed results and more figures can be found in the supplementary material.

\subsection{Sentiment Analysis}
As suggested by the reviewers, we present two more applications of our proposed model on natural language processing tasks. Comparisons between our proposed model and RNN/LSTM are made on two sentiment analysis datasets, CMU-MOSI \citep{zadeh2016mosi, zadeh2016multimodal, zadeh2018multi} and a paper reviews dataset \cite{keith2017hybrid}.
For faster computation, we fix the memory parameter $d$ to be homogeneous and decrease $K$ to $50$ in MRNN and MLSTM.

CMU-MOSI contains acoustic, language and visual information from videos.
Each sample in CMU-MOSI is annotated with a value ranging from -3 to 3.
A larger annotation indicates more positive sentiment.
The models are all trained using the MAE loss, and the overall performance is reported in Table \ref{tab:cmu-mosi}.
We conduct the same two-sample $t$-tests for MRNNF50 against RNN/LSTM using measure MAE, and the $p$-values are 0.004 and 0.156. 
Corresponding $p$-values for MLSTMF50 are 0.005 and 0.349.

\begin{table}[t]
	\caption{Overall performance on CMU-MOSI in terms of MAE, RMSE and MAPE. } 
	\label{tab:cmu-mosi}
	\vskip 0.10 in
	\begin{center}
		\begin{small}
			\begin{tabular}{lccc}
				\toprule
				& MAE & RMSE & MAPE \\
				\cmidrule[0.5pt]{1-4}
				RNN 	& \makecell{1.5028\\ (0.0186)} & \makecell{1.7368\\ (0.0171)} & \makecell{1.0314\\ (0.0339)}  \\
				LSTM 	& \makecell{1.4978\\ (0.0128)} & \makecell{1.7288\\ (0.0112)} & \makecell{\textbf{1.0146}\\ (0.0186)} \\
				MRNNF50	& \makecell{\textbf{1.4953}\\ (0.0216)} & \makecell{\textbf{1.7255}\\ (0.0109)} & \makecell{{1.0322}\\ (0.0351)} \\
				MLSTMF50	& \makecell{1.4972 \\ (0.0108)} & \makecell{1.7279\\ (0.0105)} & \makecell{1.0156\\ (0.0110)} \\
				\bottomrule 
			\end{tabular}
		\end{small}
	\end{center}
	\vskip -0.25in
\end{table}

The Paper Reviews dataset \cite{keith2017hybrid} contains 405 textual reviews evaluated with a 5-point scale. In preprocessing, we removed empty reviews and English reviews, leading to 382 remaining instances. 
For simplicity, we use a 2-layer network structure with a fully connected classifier at the output. 
The first layer uses RNN, LSTM, MRNNF50 or MLSTMF50, and the 2nd layer is fixed to be LSTM.

The overall performance is reported in Table \ref{tab:reviews} and the best performance is reported in Table \ref{tab:reviews-best}.
Using MLSTMF50 as the first layer leads to a significant improvement in all measures over LSTM.
For the best performance, MRNNF50 achieves the highest accuracy, while MLSTMF50 has clear-cut advantages on all other metrics.
Considering accuracy, the $p$-values for MLSTMF50 against RNN/LSTM are $<0.001$ and $0.040$, and that for MRNNF are $<0.001$ and $0.105$.
This experiment indicates that our proposed network component can be combined with existing ones to improve performance. 

\begin{table}[t]
	\caption{Overall performance on Paper Reviews in terms of accuracy, precision, recall and cross-entropy loss (CEloss). } 
	\label{tab:reviews}
	\begin{center}
		\begin{small}
			\begin{tabular}{lcccc}
				\toprule
				& Accuracy & Precision & Recall & CEloss\\
				\cmidrule[0.5pt]{1-5}
				RNN 	& \makecell{0.2836\\ (0.0348)} & \makecell{0.1786\\ (0.0606)} & \makecell{0.2248\\ (0.0350)}  & \makecell{{1.5787}\\ (0.0348)}\\
				LSTM 	& \makecell{0.3021\\ (0.0468)} & \makecell{0.1724\\ (0.0697)} & \makecell{{0.2274}\\ (0.0332)}& \makecell{{1.5752}\\ (0.0189)}\\
				MRNNF50	& \makecell{0.3096\\ (0.0373)} & \makecell{0.1692\\ (0.0839)} & \makecell{{0.2224}\\ (0.0428)}& \makecell{{1.5704}\\ (0.0328)}\\
				MLSTMF50& \makecell{\textbf{0.3110} \\ (0.0204)} 
				& \makecell{\textbf{0.2254}\\ (0.0707)} & \makecell{\textbf{0.2594}\\ (0.0262)} & \makecell{\textbf{1.4758} \\ (0.0218)} \\
				\bottomrule 
			\end{tabular}
		\end{small}
	\end{center}
	\vskip -0.15in
\end{table}

\begin{table}[t]
	\caption{Best performance of the models on Paper Reviews.}
	\label{tab:reviews-best}
	\vskip 0.10 in
	\centering
	\begin{small}
		\begin{tabular}{lcccc}
			\toprule
			& Accuracy & Precision & Recall & CEloss\\
			\cmidrule[0.5pt]{1-5}
			RNN 	& 0.3600 		& 0.3951 	& 0.3093  	& 1.5204 \\
			LSTM 	& 0.3800 		& 0.4304	& 0.3225  	& 1.5512 \\
			MRNNF50 &\textbf{0.4000}& 0.3992 	& 0.3178 	& 1.5209 \\
			MLSTMF50& 0.3600 & \textbf{0.4621} & \textbf{0.3596} & \textbf{1.4489} \\
			\bottomrule 
		\end{tabular}
	\end{small}
	\vskip -0.15in
\end{table}

\section{Conclusion \label{sec:conclude}}
This paper first proves that RNN and LSTM do not have long memory from a time series perspective. By getting use of fractionally integrated processes, we propose the corresponding modifications such that they can handle the long-range dependence data.
MRNN and MRNNF are shown to have advantages in forecasting time series with long-term dependency, 
and a combination of MLSTMF50 and LSTM layers significantly improves over a pure LSTM network on a paper reviews dataset.

In terms of future work, it is interesting to know whether the memory filter can bring similar advantages to other variants of recurrent networks or feed-forward networks for sequence modeling. 
Moreover, MRNN and MLSTM with dynamic $d$ is time consuming compared with other models, we leave model simplification and exploring faster optimization approaches to future work.
In term of filter design, by Definition \ref{def:mem-net}, many other slow decaying patterns can also be explored to model long memory sequences. For example, we may let $w_j(d)= j^{-d-1} $ directly. 
Last but not least, currently we only learn a best filter, and it is inspiring to extract long memory via filter banks as in signal processing.

\section*{Acknowledgements}
We thank the anonymous reviewers for their constructive feedback.
This project is sponsored by Huawei Innovation Research Program (HIRP).



\bibliography{LSTM}
\bibliographystyle{icml2020}

\newpage

\appendix
\section{Detailed Theoretical Results}
\subsection{Proof of Theorem \ref{thm:general}} \label{sec:appendix-thm1}

\begin{proof}
	Let $Y\ts{t} = (y\ts{t}{'},s\ts{t}{'}){'}$ and $r = p+q$. Rewrite model \eqref{eq:general-MC} as
	\begin{equation} \label{eq0-1}
		Y\ts{t} = \mathcal{M}(Y\ts{t-1}) + e\ts{t},
	\end{equation}
	where $Y\ts{t}, e\ts{t} \in\mathbb{R}^{r}$ and $  \mathcal{M}: \mathbb{R}^r \rightarrow \mathbb{R}^r $ is a general nonlinear function. 
	
	Let $\mathcal{B}^{r}$ be the class of Borel sets of $\mathbb{R}^{r}$ and $\nu_{r}$ be the Lebesgue measure on $(\mathbb{R}^{r},\mathcal{B}^{r})$. 
	Then, $\{Y\ts{t}\}$ is a homogeneous Markov chain on the state space $(\mathbb{R}^{r},\mathcal{B}^{r},\nu_{r})$ with the transition probability
	\begin{equation} \label{eq:transNL} 
		P(x,A) = \int_{A}f(z-\mathcal{M}(x))dz,\hspace{3mm}x\in\mathbb{R}^{r}\text{ and }A\in\mathcal{B}^{r},
	\end{equation}
	where $ f(\cdot) $ is the density of $ e\ts{t} $.
	Observe that, from Assumption \ref{ass:f}, the transition density kernel in \eqref{eq:transNL} is positive everywhere, and thus $\{Y\ts{t}\}$ is $\nu_{r}$-irreducible. 
	
	We prove by showing that Tweedie's drift criterion \cite{tweedie1983} holds, i.e. there exists a small set $ G $ with $ \nu_r(G) > 0 $ and a non-negative continuous function $ \psi(x) $ such that 
	\begin{equation} \label{Tweedie1}
		E\{\psi(Y\ts{t})|Y\ts{t-1}=x\} \leq (1-\epsilon) \psi(x),\hspace{3mm}x \notin G,
	\end{equation}
	and
	\begin{equation} \label{Tweedie2}
		E\{\psi(Y\ts{t})|Y\ts{t-1}=x\} \leq M,\hspace{3mm}x\in G,
	\end{equation}
	for some $ 0<\epsilon < 1 $ and $ 0 < M < \infty$.

	Given that $\norm{\mathcal{M}(x)} \leq a\norm{x} + b$, where $a < 1$, we have
	\begin{align*} 
		& E\left(\left.\|Y\ts{t}\|^{\kappa}\right|Y\ts{t-1} = x
		\right) \\
		\leq \ & \norm{\mathcal{M}(x)}^{\kappa} + E\|e\ts{t}\|^{\kappa} \\
		\leq \ &  |a|^{\kappa}\norm{x}^{\kappa}+|b|^{\kappa}+ E\|e\ts{t}\|^{\kappa}.
	\end{align*}
	
	Define test function $\psi(x) = 1+\norm{x}^{\kappa} > 0 $. Then,
	\begin{align*} \label{eq0-3}
		& E\left(\left.\psi(Y\ts{t})\right|Y\ts{t-1}=x\right) \\
		\leq \ &  1+|a|^{\kappa}\norm{x}^{\kappa}+|b|^{\kappa} + E\|e\ts{t}\|^{\kappa} \\
		\leq \ & \rho \psi(x) + 1 - \rho + |b|^{\kappa} + E\|e\ts{t}\|^{\kappa},
	\end{align*}
	where $ \rho = |a|^\kappa < 1 $.
	
	Denote
	$ \epsilon = 1 - \rho - \frac{\left( 1 - \rho + |b|^{\kappa} + E\left(\norm{e\ts{t}}\right)^{\kappa}\right)}{\psi(x)} $ 
	and $ G = \{x : \norm{x} \leq L\} $ 
	such that $ \psi(x) > 1 + \frac{|b|^{\kappa}+E\|e\ts{t}\|^{\kappa}}{1-\rho} $ for all $ \norm{x} > L $. We obtain that conditions (\ref{Tweedie1}) and (\ref{Tweedie2}) hold.
	
	Moreover, $ E\left(\left.\phi(Y\ts{t})\right|Y\ts{t-1}=x\right) $
	is continuous with respect to $ x $ for any bounded continuous function $\phi(\cdot)$, then $\{Y\ts{t}\}$ is a Feller chain. By Feigin \& Tweedie \yrcite{feigin1985random}, $G$ is a small set. By referring to Theorem 4(ii) in Tweedie \yrcite{tweedie1983} and Theorem 1 in Feigin \& Tweedie \yrcite{feigin1985random}, $\{Y\ts{t}\}$ is geometrically ergodic with a unique strictly stationary solution.
	
	Furthermore, for a univariate stationary process $ \{Y\ts{t}\} $, by Harris Theorem \cite{meyn2012markov}, its autocovariance function satisfies $ \gamma_k = \text{Cov}(Y\ts{t+k}, Y\ts{t}) \leq \gamma_0 a^k $ for $ k \in \mathbb{N} $ and some $ 0 < a < 1 $. Thus, the autocovariance function is summable and the process has short memory.
\end{proof}

\subsection{Proof of Theorem \ref{thm:general2}} \label{sec:appendix-thm2}

\begin{proof}
	Proof of a similar result might exist in the literature, but we are unaware of the specific paper(s). For the convenience of the readers, we outline the proof here. 
	
	Let the Markov chain $\{Y\ts{t}\}$ and its state space be defined as in (i). Under the linear setting, model \eqref{eq:general-MC} can be written as
	\begin{equation} \label{eq1}
		Y\ts{t} = WY\ts{t-1} + e\ts{t},
	\end{equation}
	where $Y\ts{t}\in\mathbb{R}^{r}$ and $W\in\mathbb{R}^{r\times r}$, and the transition probability can be written as
	\begin{equation} \label{eq0}
		P(x,A) = \int_{A}f(z-Wx)dx,\hspace{3mm}x\in\mathbb{R}^{r}\text{ and }A\in\mathcal{B}^{r}.
	\end{equation}
	Under Assumption \ref{ass:f}, $\{Y\ts{t}\}$ is $\nu_{r}$-irreducible.
	
	First, suppose $\rho(W)<1$. Then, there exists an integer $s$ such that $\norm{W^{s}} < 1$. In the following, we prove that $s$-step Markov chain $\{Y\ts{ts}\}$ satisfies Tweedie's drift criterion \citep{tweedie1983}, i.e., there exists a small set $G$ with $\nu_r(G)>0$ and a non-negative continuous function $\psi(x)$ such that
	\begin{equation} 
		E\{\psi(Y\ts{ts})|Y\ts{(t-1)s} = x\} \leq (1-\epsilon)\psi(x), \hspace{3mm}x \notin G,
	\end{equation}
	and
	\begin{equation} 
		E\{\psi(Y\ts{ts})|Y\ts{(t-1)s} = x \} \leq M,\hspace{3mm}x\in G,
	\end{equation}
	for some constant $0<\epsilon<1$ and $0<M<\infty$.
	
	We iterate \eqref{eq1} $s$ times and obtain
	\begin{equation*}
		Y\ts{ts} = W^{s}Y\ts{(t-1)s} + \left(e\ts{ts}+\sum_{j=1}^{s-1}W^{j}e\ts{ts-j}\right).
	\end{equation*}
	Let $g(x) = 1+\norm{x}^{\kappa}$, and it can be verified that
	\begin{equation*}
		\begin{split}
			& E\{\psi(Y\ts{ts})|Y\ts{(t-1)s} = x\} \\ 
			\leq \ & 1+\norm{W^{s}}^{\kappa}\norm{x}^{\kappa} + E\left(e\ts{ts}+\sum_{j=1}^{s-1}W^{j}e\ts{ts-j}\right)\\
			\leq \ & \psi(x)\norm{W^{s}}^{\kappa} + C,
		\end{split}
	\end{equation*}
	where $C = 1 + E(e\ts{ts}+\sum_{j=1}^{s-1}W^{j}e\ts{ts-j}) - \norm{W^{s}}^{\kappa} <\infty $. Note that $\norm{W^{s}}^{\kappa} < 1$. Then there exists $L>0$, such that 
	\begin{equation*}
		E\{\psi(Y\ts{ts})|Y\ts{(t-1)s} = x \} \leq (1-\epsilon)\psi(x),\hspace{3mm} \forall \norm{x} > L,
	\end{equation*}and
	\begin{equation*}
		E\{\psi(Y\ts{ts})|Y\ts{(t-1)s} = x\} \leq M < \infty,\hspace{3mm} \forall \norm{x} \leq L.
	\end{equation*}
	and $G = \{x:\norm{x} \leq L\}$ with $\nu_{r}(G) > 0$.
	
	Moreover, because for each bounded continuous function $\phi(\cdot)$, $E\{\phi(Y\ts{ts})|Y\ts{(t-1)s} = x\}$ is continuous with respect to $x$, $\{Y\ts{ts}\}$ is a Feller chain. And $\{Y\ts{ts}\}$ is $\nu_r$-irreducible. This implies that $G$ is a small set \citep{feigin1985random}. By referring to Theorem 4(ii) in Tweedie \yrcite{tweedie1983},
	we can show that $\{Y\ts{ts}\}$ is geometrically ergodic with a unique strictly stationary solution. 
	By Lemma 3.1 of Tj{\o}stheim \yrcite{tjostheim1990non}, $\{Y\ts{t}\}$ is geometrically ergodic. 
	
	Then, we prove the necessity. 
	Suppose that model \eqref{eq1} is geometrically ergodic, then there exists a strictly stationary solution $ \{Y\ts{t}\} $ to model \eqref{eq1} \citep{feigin1985random}. And then the Markov chain $Y\ts{t}$ have a stationary distribution $\pi(\cdot)$, from which we can generate $Y\ts{0}$, and iteratively obtain the sequence $\{Y\ts{t}\}$. It is nonanticipative and equation \eqref{eq1} holds. 
	
	From \eqref{eq0}, it holds that 
	\begin{equation*}
		P(Y\ts{t}\in A|Y\ts{t-1} = x) = P(x, A) > 0
	\end{equation*}
	as $\nu_{r}(A) > 0$. Let $H$ be any affine invariant subspace of $\mathbb{R}^{r}$ under model \eqref{eq1}, i.e. $\{Wx+e\ts{t}:x\in H\}\subseteq H)$ with probability one . If $\nu_{r}(\mathbb{R}^r-H) \neq 0$, then for any $x\in H$, $P(Wx+e\ts{t}\in H) < 1$. As a result, $\mathbb{R}^r$ is the unique affine invariant subspace, and hence model $\eqref{eq1}$ is irreducible. Thus, by Theorem 2.5 in Bougerol \& Picard \yrcite{bougerol1992strict}, we have that the the top Lyapounov exponent is strictly negative, and thus spectral radius $\rho(W) = \norm{W^s}^{1/s} < 1$. This completes the proof of (ii).
	
\end{proof}

\subsection{Proof of Corollary \ref{cor:RNN-W}} \label{sec:appendix-rnn}

\begin{proof}
	Need to show that there always exist real numbers $ a<1 $ and $ b $ such that $\norm{\mathcal{M}_{\textsc{Rnn}} \left(u, v\right)} \leq a\norm{(u',v')'} + b$.
	
	Since $ g(\cdot) $ and $ \sigma(\cdot) $ are bounded, there exist positive constants $ M_1 $ and $ M_2 $ such that $ \| g(W_{zh} \sigma(W_{hh} v + W_{hy} u + b_h) + b_z)\|_{l_1} \leq M_1 $, $ \|\sigma(W_{hh} v + W_{hy} u + b_h)\|_{l_1} \leq M_2 $ for any $ u \in \mathbb{R}^p, v \in \mathbb{R}^q $. 
	
	Let $ a=a_0 \in (0, 1) $ and $ b = M_1 + M_2 $, we have 
	$\norm{\mathcal{M}_{\textsc{Rnn}} \left(u, v\right)}_{l_1} - a_0 \norm{(u',v')'}_{l_1} 
	\leq M_1 + M_2 - a_0 \|u\|_{l_1} - a_0 \|v\|_{l_1}
	\leq b = M_1 + M_2 $.
	By Theorem \ref{thm:general}, model \eqref{eq:RNN-MC} with bounded and continuous output and activation function is geometrically ergodic and has short memory.
\end{proof}

\subsection{Apply Theorem \ref{thm:general} to LSTM networks with \boldmath$p=q=1$}

We use an LSTM process with $p=q=1$ as an example to illustrate the application of Theorem \ref{thm:general} to LSTM networks, and prepare readers for Corollary \ref{cor:LSTM-W}. 
Assume that the norm $ \|\cdot\| $ in Theorem \ref{thm:general} is the $l_1$ norm. 
Although sigmoid is used by default as the activation functions for the gates, we also consider $ \sigma(\cdot) $ as ReLU or tanh for theoretical interests here. For output function $ g(\cdot) $, we consider commonly used linear, sigmoid and softmax functions. We summarize our results in Table A\ref{tab:LSTM}.

\begin{table*}[t]
	\caption{Application of Theorem \ref{thm:general} to specific LSTMs.}
	\label{tab:LSTM}
	\begin{center}
		\begin{small}
			\begin{tabular}{cccc}
				\toprule
				& & \multicolumn{2}{c}{Activation function $ \sigma $} \\
				\cmidrule[0.4pt](lr{0.125em}){3-4}%
				& & ReLU or identity & sigmoid or tanh \\
				\hline
				\multirow{3}{*}{\makecell{Output\\function\\$ g $}} & identity 
				&\makecell{$|w_{oh}| + |w_{ih}| +|w_{zh}w_{oh}|\leq a$, \\ $|w_{oy}| + |w_{iy}| + |w_{zh}w_{oy}|\leq a$, \\$|w_{fh}v+w_{fy}u+b_{f}|\leq a$}  
				& No  \\
				\cmidrule[0.4pt](lr{0.125em}){2-4}%
				& sigmoid 
				& \makecell{$|w_{oh}| + |w_{ih}|\leq a$, \\ $|w_{oy}| + |w_{iy}|\leq a$, \\$|w_{fh}v+w_{fy}u+b_{f}|\leq a$} 
				& $|\sigma(w_{fh}+w_{fy}+b_{f})|\leq a$ \\
				\cmidrule[0.4pt](lr{0.125em}){2-4}%
				& softmax 
				& \makecell{$|w_{oh}| + |w_{ih}|\leq a$, \\ $|w_{oy}| + |w_{iy}|\leq a$, \\ $|w_{fh}v+w_{fy}u+b_{f}|\leq a$} 
				& $|\sigma(w_{fh}+w_{fy}+b_{f})|\leq a$  \\
				\bottomrule
			\end{tabular}
		\end{small}
	\end{center}
	\vskip -0.1in
\end{table*}

\subsection{Proof of Corollary \ref{cor:LSTM-W}} \label{sec:appendix-lstm}

\begin{proof}
	Let $ a=a_0 \in (0, 1) $ and $ b = M + 2q $, we have 
	\begin{align*}
		& \norm{\mathcal{M}_{\textsc{Lstm}} \left(u, v, w\right)}_{l_1} - a_0 \norm{(u', v', w')'}_{l_1} \\
		\leq & ~ \|g(W_{zh} x + b_z)\|_{l_1} + \|x\|_{l_1} \\
		& \qquad + \|\boldsymbol{1}_q + f(u, v) \odot w\|_{l_1} - a_0 \norm{(u', v', w')'}_{l_1} \\
		\leq & ~ M + q + q + \|f(u, v)\|_{l_\infty} \|w\|_{l_1} \\
		& \qquad - a_0 \|u\|_{l_1} - a_0 \|v\|_{l_1} - a_0 \|w\|_{l_1} \\
		\leq & ~ M + 2q  - a_0 \|u\|_{l_1} - a_0 \|v\|_{l_1} \\
		& \qquad + (\|f(u, v)\|_{l_\infty}-a_0)\|w\|_{l_1} \nonumber \\
		\leq & ~ b = M + 2q,
	\end{align*}
	where
	$ x = o(u, v) \odot \tanh (i(u, v) \odot \tanh (W_{ch} v + W_{cy} u + b_c) + f(u, v) \odot w) \in B_\infty^q $,
	$ v = h\ts{t} \in B_\infty^q $, and
	$ u = y\ts{t-1} \in B_\infty^p $.
	The second inequality holds due to the definition of $ M $ and $ x \in B_\infty^q $.
	The forth inequality holds due to 
	$ \|f(u, v)\|_{l_\infty} = \|\sigma(W_{fh} v + W_{fy} u + b_f)\|_{l_\infty} 
	\leq \sigma(\|W_{fh}\|_{l_\infty} + \|W_{fy}\|_{l_\infty} + \|b_f\|_{l_\infty})
	\leq a_0. $
	
	By Theorem \ref{thm:general}, model \eqref{eq:LSTM-MC} is geometrically ergodic and has short memory.
\end{proof}

\subsection{Proof of Theorem \ref{thm:MRNN}} \label{sec:mrnn-mem}

\begin{proof}
	Without loss of generality, assume that the linear activation and output functions are identity. 
	
	(1) The RNN process can be written as
	\begin{equation*}
		\begin{cases}
			y\ts{t} = W_{zh} h\ts{t} + \varepsilon\ts{t} \\
			h\ts{t} = W_{hh} h\ts{t-1} + W_{hx} x\ts{t} 
		\end{cases}.
	\end{equation*}
	Then, $  h\ts{t} = (I - W_{hh}\mathcal{B})^{-1} W_{hx} x\ts{t} $, and we have
	\begin{equation*}
		y\ts{t} = W_{zh} (I - W_{hh}\mathcal{B})^{-1} W_{hx} x\ts{t} + \varepsilon\ts{t} .
	\end{equation*}
	Let $ y\ts{t} = \sum_{k=0}^{\infty} A_k x\ts{t-k} + \varepsilon\ts{t}$. 
	Since $ (I - W_{hh}\mathcal{B})^{-1} = \sum_{k=0}^{\infty} W_{hh}^k \mathcal{B}^k $, we have $ A_k = W_{zh} W_{hh}^k W_{hx} $, and $ (A_k)_{ij} $ decays exponentially for all $ i, j $.
	
	(2) The MRNNF process can be written as
	\begin{equation*}
		\begin{cases}
			y\ts{t} = W_{zh} h\ts{t} + W_{zm} m\ts{t} + \varepsilon\ts{t} \\
			h\ts{t} = W_{hh} h\ts{t-1} + W_{hx} x\ts{t} \\
			m\ts{t} = W_{mm} m\ts{t-1} + W_{mf} ((I-\mathcal{B})^d -I) x\ts{t}
		\end{cases}.
	\end{equation*}
	Then,
	\begin{equation*}
		\begin{cases}
			h\ts{t} = (I- W_{hh}\mathcal{B})^{-1} W_{hx} x\ts{t} \\
			m\ts{t} = (I- W_{mm}\mathcal{B})^{-1} W_{mf} ((I-\mathcal{B})^d -I) x\ts{t}
		\end{cases}.
	\end{equation*}
	Let $ y\ts{t} = \sum_{k=0}^{\infty} A_k x\ts{t-k} + \varepsilon\ts{t}$, then
	$ A_k = C_k + D_k $, where
	\begin{equation*}
		\begin{cases}
			\sum_{k=0}^{\infty} C_k x\ts{t-k} = W_{zh} (I- W_{hh}\mathcal{B})^{-1} W_{hx} x\ts{t} \\
			\sum_{k=0}^{\infty} D_k x\ts{t-k} = \\
			\qquad W_{zm} (I- W_{mm}\mathcal{B})^{-1} W_{mf} ((I-\mathcal{B})^d -I) x\ts{t}
		\end{cases}.
	\end{equation*}
	From part (1) we know that the entries in $ C_k $ decay exponentially as well as the entries in the first part $ W_{zm} (I- W_{mm}\mathcal{B})^{-1} $ in $ D_k $.
	Since $ (I-\mathcal{B})^d -I = \sum_{k=1}^{\infty} W_k \mathcal{B}^k $ and $ W_k $'s are diagonal matrices with $ (W_k)_{ii} \sim k^{-d_i-1} $, the decay of $ D_k $ is dominated by $ (I-\mathcal{B})^d -I $, and entries of $ A_k $ decay at some polynomial rate $ k^{-d_j-1} $.
\end{proof}

\subsection{Memory Property of Constant-gates-LSTM and Constant-gates-MLSTM} \label{sec:lstm-mem}

\begin{thm} 
	
	In terms of Definition \ref{def:mem-net}, the constant-gates-MLSTM has the capability of handling long-range dependence data, while the constant-gates-LSTM cannot.
\end{thm}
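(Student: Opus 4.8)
The plan is to mirror the proof of Theorem \ref{thm:MRNN}: linearize both networks to obtain their linear network processes (set all activations and the output function to the identity; hold the gates $i,f,o$ and the memory parameter $d$ constant), read off the coefficient matrices $A_k$ in the representation \eqref{longmemoryprocess}, and compare the decay of the entries against Definition \ref{def:mem-net}. After linearization the read-out collapses to $h\ts{t}=\mathrm{diag}(o)\,c\ts{t}$ and $\tilde c\ts{t}=W_{ch}\,\mathrm{diag}(o)\,c\ts{t-1}+W_{cx}x\ts{t}+b_c$, so in both models the whole input-to-output map factors through the cell state $c\ts{t}$ driven by the exogenous $x\ts{t}$. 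Since ``capability'' only requires one witnessing configuration, I would first treat the scalar case $p=q=1$, where every weight and gate is a number; the multivariate case then follows exactly as the argument for $D_k$ in Theorem \ref{thm:MRNN}, with the slowest-decaying fractional coordinate setting the dominant rate.

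For the constant-gates-LSTM, substitute $h\ts{t-1}=o\,c\ts{t-1}$ into $\tilde c\ts{t}$ and into the cell recursion $c\ts{t}=i\,\tilde c\ts{t}+f\,c\ts{t-1}$ from \eqref{eq:lstmcell}, obtaining a first-order linear recursion $c\ts{t}=q\,c\ts{t-1}+i\,W_{cx}x\ts{t}+\mathrm{const}$ with $q=i\,W_{ch}\,o+f$. Inverting gives $c\ts{t}=(1-q\mathcal{B})^{-1} i\,W_{cx}\,x\ts{t}+\mathrm{const}$, so $A_k\propto q^{k}$. For any stationary parameterization $|q|<1$, whence $(A_k)$ decays geometrically and, by Definition \ref{def:mem-net}, the process has no long memory; if $|q|\ge 1$ there is no stationary solution. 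This settles the ``cannot'' half for every admissible configuration.

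For the constant-gates-MLSTM, the same substitution turns the fractional cell equation \eqref{proc:MLSTM-CS}, i.e. $(1-\mathcal{B})^{d}c\ts{t}=i\,\tilde c\ts{t}$, into the operator identity $\big[(1-\mathcal{B})^{d}-r\mathcal{B}\big]c\ts{t}=i\,W_{cx}x\ts{t}+\mathrm{const}$ with $r=i\,W_{ch}\,o$, so that $c\ts{t}=\Psi(\mathcal{B})\,i\,W_{cx}\,x\ts{t}+\mathrm{const}$ for $\Psi(z)=\big[(1-z)^{d}-rz\big]^{-1}$, and the output coefficients are $A_k=W_{zh}\,o\,\psi_k\,i\,W_{cx}$ where $\psi_k$ are the Taylor coefficients of $\Psi$. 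The crux is the asymptotics of $\psi_k$. Assuming the weights are small enough that $\phi(z)=(1-z)^{d}-rz$ has no zero in the closed unit disk (making $\{c\ts{t}\}$ stationary and $\Psi$ a summable transfer function), the only singularity of $\Psi$ on the unit circle is the branch point of $(1-z)^{d}$ at $z=1$, where $\phi(1)=-r\neq 0$. Factoring $\phi(z)=-rz\big(1-(1-z)^{d}/(rz)\big)$ and expanding shows the leading non-analytic part of $\Psi$ near $z=1$ is $-(1-z)^{d}/r^{2}$; because the coefficients of $(1-z)^{d}$ are precisely the $w_k(d)\sim k^{-d-1}$ of \eqref{eq:1-Bd}, a transfer-theorem argument yields $\psi_k\sim k^{-d-1}$, hence $(A_k)\sim k^{-d-1}$ with $d\in(0,0.5)$ once the weights are chosen so the scalar prefactor is nonzero. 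By Definition \ref{def:mem-net} this proves the ``can'' half.

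The hard part will be this last step. Unlike Theorem \ref{thm:MRNN}, where the filter $(1-\mathcal{B})^{d}-I$ already sits in the moving-average form and $A_k=w_k(d)$ is immediate, here the fractional operator sits on the autoregressive side and must be inverted \emph{together with} the feedback $r\mathcal{B}$. I would therefore need to verify carefully that (i) the feedback introduces no pole on or inside the unit circle (a smallness condition on $r=i\,W_{ch}\,o$), and (ii) the $z=1$ branch point governs the coefficient asymptotics, so that the integration-type operator $(1-\mathcal{B})^{d}$ on the left still produces the differencing-type decay $k^{-d-1}$ rather than the $k^{d-1}$ rate that the naive inverse $(1-\mathcal{B})^{-d}$ would give when the feedback is absent. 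Making this rigorous requires a transfer theorem for coefficients of functions with an algebraic singularity, and in the multivariate case an additional argument that one fractional coordinate dominates the polynomial rate.
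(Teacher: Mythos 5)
Your proposal reduces the problem exactly as the paper does: linearize with constant gates, collapse the read-out through the cell state, and arrive at the same operator identities---$(I-(D_f+D_iW_{ch}D_o)\mathcal{B})\,c^{(t)}=D_iW_{cx}x^{(t)}$ for the constant-gates-LSTM (your scalar $q=i\,W_{ch}\,o+f$), and $((I-\mathcal{B})^{d}-D_iW_{ch}D_o\mathcal{B})\,c^{(t)}=D_iW_{cx}x^{(t)}$ for the constant-gates-MLSTM (your $r=i\,W_{ch}\,o$)---so the ``cannot'' half is identical in both texts. Where you genuinely diverge is the decisive step: the coefficient asymptotics of $((I-\mathcal{B})^{d}-C\mathcal{B})^{-1}$. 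The paper expands this inverse as $\sum_j \Theta_j\mathcal{B}^j$, equates coefficients to obtain the recursion $\Theta_k=C\Theta_{k-1}-\sum_{j=1}^{k}W_j\Theta_{k-j}$, and then simply asserts that the $\Theta_k$ are ``dominated by the $W_k$ term'' and hence decay like $k^{-d-1}$; no proof of this domination is given. You instead treat the symbol $\Psi(z)=((1-z)^{d}-rz)^{-1}$ as an analytic function, factor it near $z=1$, identify $-(1-z)^{d}/r^{2}$ as the leading non-analytic part, and invoke a Flajolet--Odlyzko-type transfer theorem. Your route costs extra hypotheses and machinery (a zero-free condition for $\phi(z)=(1-z)^{d}-rz$ on a domain slightly larger than the unit disk, plus the transfer theorem itself), but it buys precisely what the paper's one-line domination claim lacks: a rigorous argument and a statement of when the claim actually holds. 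In particular, your observation (ii)---that with $r=0$ the inverse is $(1-\mathcal{B})^{-d}$, whose coefficients are of order $k^{d-1}$ rather than $k^{-d-1}$---shows that the paper's assertion cannot follow from the recursion alone and silently requires the feedback $C\neq 0$: it is the fact that $\phi(1)=-r\neq 0$ that turns the branch point at $z=1$ from a zero of the symbol into a regular value with a $u^{d}$ correction, producing the differencing-type rate demanded by Definition \ref{def:mem-net}. So: same decomposition and same LSTM half, but your key lemma (singularity analysis of the transfer function) is different from, and strictly stronger than, the paper's recursion-plus-domination argument; completing the transfer-theorem details you flag would yield a proof more solid than the one in the paper.
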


\begin{proof}
	Without loss of generality, assume that the linear activation and output functions are identity. 
	
	(1) The constant-gates-LSTM process can be written as
	\begin{equation*}
		\begin{cases}
			y\ts{t} = W_{zh} h\ts{t} + \varepsilon\ts{t} \\
			\tilde{c}\ts{t} = W_{ch} h\ts{t-1} + W_{cx} x\ts{t} \\
			c\ts{t} = D_i \, \tilde{c}\ts{t} + D_f \, c\ts{t-1} \\
			h\ts{t} = D_o \, c\ts{t}	
		\end{cases},
	\end{equation*}
	where $D_i$, $D_f$ and $D_o$ are matrices obtained by diagonalize the constant gates.
	
	Then, $ (I - D_f \mathcal{B}) c\ts{t} 
	= D_i \tilde{c}\ts{t} 
	= D_i (W_{ch} h\ts{t-1} + W_{cx} x\ts{t})
	= D_i W_{ch} D_o \, c\ts{t-1} + D_i W_{cx} x\ts{t} $,
	and we have $ (I - (D_f + D_i W_{ch} D_o) \mathcal{B}) c\ts{t} 
	= D_i W_{cx} x\ts{t} $. 
	Thus, $ c\ts{t} = (I - (D_f + D_i W_{ch} D_o) \mathcal{B})^{-1} D_i W_{cx} x\ts{t-1} $, then $ y\ts{t} = W_{zh} D_o (I - (D_f + D_i W_{ch} D_o) \mathcal{B})^{-1} D_i W_{cx} x\ts{t} + \varepsilon\ts{t} $. 
	From the proof of Theorem \ref{thm:MRNN} (1) we know that writing $ y\ts{t} = \sum_{k=0}^{\infty} A_k x\ts{t-k} + \varepsilon\ts{t}$, we have all the entries of $ A_k $ decay exponentially.
	
	(2) The constant-gates-MLSTM process can be written as
	\begin{equation*}
		\begin{cases}
			y\ts{t} = W_{zh} h\ts{t} + \varepsilon\ts{t} \\
			\tilde{c}\ts{t} = W_{ch} h\ts{t-1} + W_{cx} x\ts{t} \\
			(I-\mathcal{B})^{d} \, c\ts{t} = D_i \tilde{c}\ts{t} \\
			h\ts{t} = D_o c\ts{t}
		\end{cases}.
	\end{equation*}
	Then, $ (I - \mathcal{B})^{d} \, c\ts{t} 
	= D_i (W_{ch} h\ts{t-1} + W_{cx} x\ts{t}) 
	= D_i W_{ch} D_o c\ts{t-1} + D_i W_{cx} x\ts{t} $,
	and we have $ ((I - \mathcal{B})^{d} - D_i W_{ch} D_o \mathcal{B}) \, c\ts{t} 
	= D_i W_{cx} x\ts{t} $. 
	Thus, $ c\ts{t} = ((I - \mathcal{B})^{d} - D_i W_{ch} D_o \mathcal{B})^{-1} D_i W_{cx} x\ts{t} $, then $ y\ts{t} = W_{zh} D_o ((I - \mathcal{B})^{d} - D_i W_{ch} D_o \mathcal{B})^{-1} D_i W_{cx} x\ts{t} + \varepsilon\ts{t} $. 
	
	Now we need to obtain the rate of polynomial $ ((I - \mathcal{B})^{d} - C \mathcal{B})^{-1} $ for some matrix $ C = D_i W_{ch} D_o $.
	Let $ ((I - \mathcal{B})^{d} - C \mathcal{B})^{-1} = \sum_{j=0}^{\infty} \Theta_j \mathcal{B}^j $, then $ (\sum_{j=0}^{\infty} \Theta_j \mathcal{B}^j)((I - \mathcal{B})^{d} -C \mathcal{B}) = I $. Thus,
	\begin{align*}
		(\sum_{j=0}^{\infty} \Theta_j \mathcal{B}^j)(I - \mathcal{B})^{d} & = I + C \sum_{j=0}^{\infty} \Theta_j \mathcal{B}^{j+1} \\
		(\sum_{j=0}^{\infty} \Theta_j \mathcal{B}^j)(\sum_{k=0}^{\infty} W_k \mathcal{B}^k) & = I + C \sum_{j=0}^{\infty} \Theta_j \mathcal{B}^{j+1} \\
		\sum_{j=0}^{\infty} \sum_{k=0}^{\infty} \Theta_j \mathcal{B}^j  W_k \mathcal{B}^k  & = I + C \sum_{j=0}^{\infty} \Theta_j \mathcal{B}^{j+1}.
	\end{align*}
	Equate the coefficients for each $ B^j $ term for $ j = 0, 1, 2... $, we have 
	\begin{align*}
		\begin{cases}
			\Theta_0 & = I \\
			\Theta_1 & = C-W_1 \\
			\Theta_2 & = C\Theta_1 - W_1 \Theta_1 - W_2 \\
			\Theta_3 & = C\Theta_2 - W_1 \Theta_2 - W_2 \Theta_1 - W_3 \\
			& \vdots \\
			\Theta_k & = C \Theta_{k-1} - \sum_{j=1}^{k} W_j \Theta_{k-j}
		\end{cases}.
	\end{align*}
	The $ \Theta_k $'s are dominated by the $ W_k $ term and the elements decay at the same rate as $ W_k $, which is $ k^{-d_j-1} $. 
\end{proof}

\section{More Numerical Results}

\subsection{Autocorrelation Plots of All Datasets} \label{sec:acfs}

Autocorrelation plots of all 4 datasets, ARFIMA, DJI, traffic and tree, are shown in Figure \ref{fig:allacfs}.

\begin{figure*}[t]
	\centering
	\subfigure[ARFIMA]{\includegraphics[width=0.95\columnwidth]{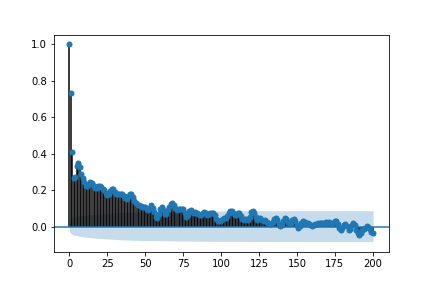}}
	\subfigure[DJI]{\includegraphics[width=0.95\columnwidth]{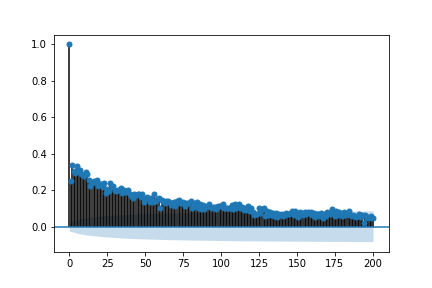}} \\
	\subfigure[traffic]{\includegraphics[width=0.95\columnwidth]{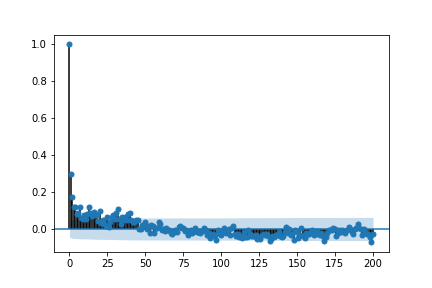}}
	\subfigure[tree]{\includegraphics[width=0.95\columnwidth]{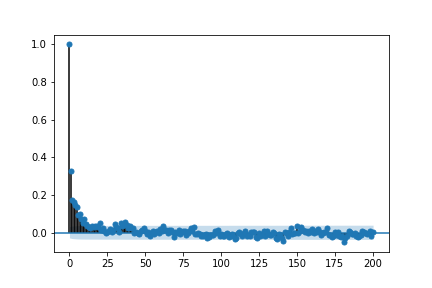}}
	\caption{Autocorrelation plots of all 4 datasets.}
	\label{fig:allacfs}
\end{figure*}

\subsection{Overall Performance of the Models} \label{sec:overall}

Average RMSE and standard deviation of one-step forecasting is reported in the main paper. We provide results in terms of MAE and MAPE, as well as figures, in this section.

\paragraph{RMSE}
Boxplot of RMSE for 100 different initializations are shown in Figure \ref{fig:overall-RMSE} for datasets ARFIMA, DJI, traffic and tree. 

\begin{figure*}[t]
	\centering
	\subfigure[ARFIMA]{\includegraphics[width=0.95\columnwidth]{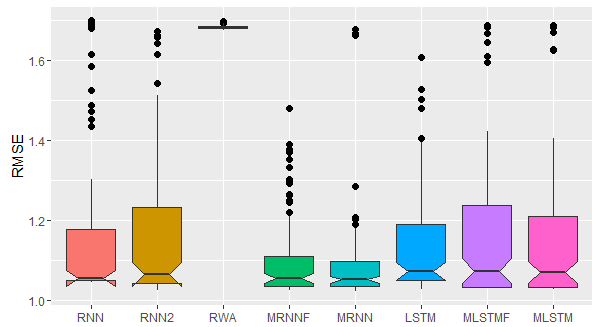}}
	\subfigure[DJI]{\includegraphics[width=0.95\columnwidth]{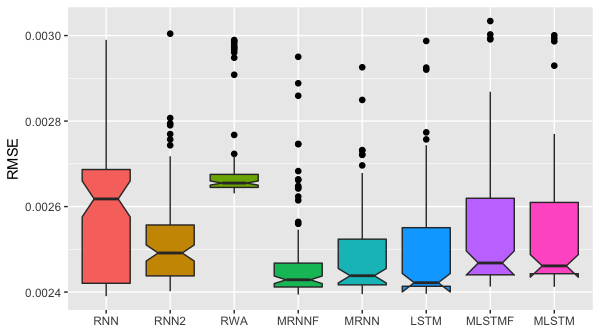}} \\
	\subfigure[traffic]{\includegraphics[width=0.95\columnwidth]{traffic-RMSE-box.png}}
	\subfigure[tree]{\includegraphics[width=0.95\columnwidth]{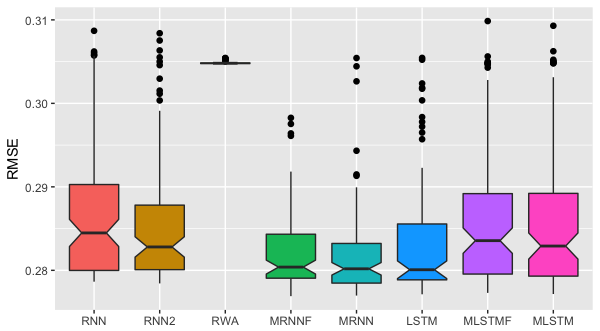}}
	\caption{Boxplot of RMSE for 100 different initializations.}
	\label{fig:overall-RMSE}
\end{figure*}

\paragraph{MAE}
Average MAE and standard deviation of one-step forecasting is shown in Table \ref{tab:average-mae}.

\begin{table}[t]
	\caption{Overall performance in terms of MAE. Average MAE and the standard deviation (in brackets) are reported.}
	\label{tab:average-mae}
	\vskip 0.15in
	\begin{center}
		\begin{small}
			\begin{tabular}{lcccc}
				\toprule
				& ARFIMA & DJI (x100)& Traffic & Tree \\
				\cmidrule[0.5pt]{1-5}
				RNN 	& \makecell{0.9310\\ (0.1550)} & \makecell{0.1977\\ (0.0242)} & \makecell{233.442\\ (12.391)} & \makecell{0.2240\\ (0.0064)} \\
				RNN2 	& \makecell{0.9310\\ (0.1430)} & \makecell{0.1861\\ (0.0164)} & \makecell{233.419\\ (12.378)} & \makecell{0.2229\\ (0.0057)} \\
				RWA 	& \makecell{1.3330\\ (0.0030)} 
				& \makecell{0.2052\\ (0.0164)} & \makecell{233.137\\ (7.425)} & \makecell{0.2379\\ (0.0001)} \\
				MRNNF 	& \makecell{0.8800\\ (0.0790)} & \makecell{\textbf{0.1809}\\ (0.0168)} & \makecell{\textbf{232.554}\\ (11.954)} & \makecell{0.2206\\ (0.0034)} \\
				MRNN 	& \makecell{\textbf{0.8710}\\ (0.0900)} 
				& \makecell{0.1835\\ (0.0165)} & \makecell{232.794\\ (12.149)} & \makecell{\textbf{0.2202}\\ (0.0037)} \\
				\cmidrule[0.5pt]{1-5}
				LSTM 	& \makecell{0.9070\\ (0.0940)} & \makecell{0.1841\\ (0.0182)} & \makecell{234.055\\ (11.149)} & \makecell{0.2215\\ (0.0051)} \\
				MLSTMF 	& \makecell{0.9240\\ (0.1320)} & \makecell{0.1895\\ (0.0203)} & \makecell{233.142\\ (11.551)} & \makecell{0.2235\\ (0.0060)} \\
				MLSTM 	& \makecell{0.9170\\ (0.1320)} & \makecell{0.1881\\ (0.0187)} & \makecell{233.035\\ (10.793)} & \makecell{0.2234\\ (0.0061)} \\
				\bottomrule 
			\end{tabular}
		\end{small}
	\end{center}
	\vskip -0.15in
\end{table}

Boxplot of MAE for 100 different initializations are shown in Figure \ref{fig:overall-MAE} for datasets ARFIMA, DJI, traffic and tree. 

\begin{figure*}[t]
	\centering
	\subfigure[ARFIMA]{\includegraphics[width=0.95\columnwidth]{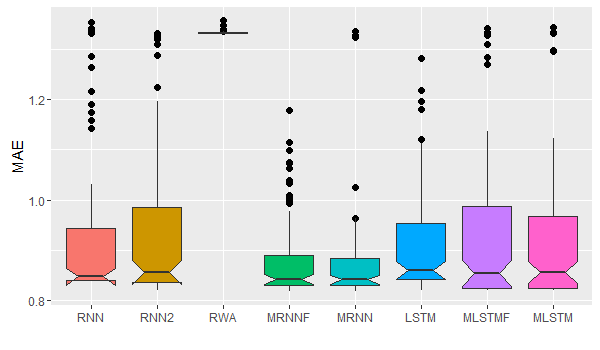}}
	\subfigure[DJI]{\includegraphics[width=0.95\columnwidth]{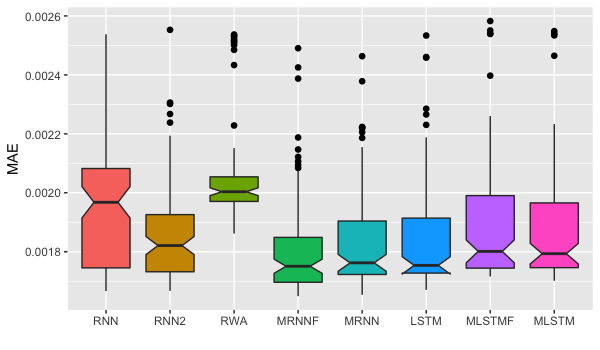}} \\
	\subfigure[traffic]{\includegraphics[width=0.95\columnwidth]{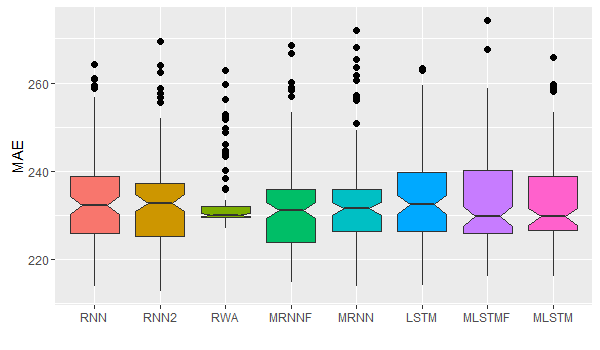}}
	\subfigure[tree]{\includegraphics[width=0.95\columnwidth]{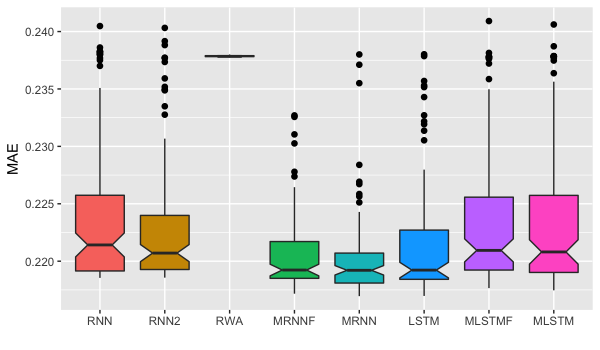}}
	\caption{Boxplot of MAE for 100 different initializations.}
	\label{fig:overall-MAE}
\end{figure*}

\paragraph{MAPE}
Average MAPE and standard deviation of one-step forecasting is shown in Table \ref{tab:average-mape}.

\begin{table}[t]
	\caption{Overall performance in terms of MAPE. Average MAPE and the standard deviation (in brackets) are reported.}
	\label{tab:average-mape}
	\vskip 0.15in
	\begin{center}
		\begin{small}
			\begin{tabular}{lcccc}
				\toprule
				& ARFIMA & DJI (x100)& Traffic & Tree \\
				\cmidrule[0.5pt]{1-5}
				RNN 	& \makecell{2.5760\\ (0.4030)} & \makecell{1.4371\\ (0.2566)} & \makecell{1.3943\\ (0.1998)} & \makecell{0.2747\\ (0.0079)} \\
				RNN2 	& \makecell{2.5570\\ (0.4420)} & \makecell{1.4407\\ (0.2106)} & \makecell{1.4092\\ (0.1789)} & \makecell{0.2739\\ (0.0071)} \\
				RWA 	& \makecell{\textbf{2.2370}\\ (0.1950)} 
				& \makecell{\textbf{1.2733}\\ (0.1702)} & \makecell{1.3745\\ ({0.1457})} & \makecell{0.2939\\ (0.0005)} \\
				MRNNF 	& \makecell{2.6430\\ (0.3380)} & \makecell{1.5561\\ (0.2243)} & \makecell{1.4270\\ (0.1834)} & \makecell{0.2714\\ (0.0042)} \\
				MRNN 	& \makecell{2.7010\\ (0.2680)} & \makecell{1.5031\\ (0.2045)} & \makecell{1.4253\\ (0.1586)} & \makecell{\textbf{0.2706}\\ (0.0044)} \\
				\cmidrule[0.5pt]{1-5}
				LSTM 	& \makecell{2.5660\\ (0.3750)} & \makecell{1.5725\\ (0.2283)} & \makecell{1.3632\\ (0.1807)} & \makecell{0.2727\\ (0.0060)} \\
				MLSTMF 	& \makecell{2.5100\\ (0.4690)} & \makecell{1.3141\\ (0.1369)} & \makecell{\textbf{1.3462}\\ (0.1769)} & \makecell{0.2750\\ (0.0074)} \\
				MLSTM 	& \makecell{2.5500\\ (0.4370)} & \makecell{1.3123\\ (0.1281)} & \makecell{1.3353\\ (0.1926)} & \makecell{0.2748\\ (0.0075)} \\
				\bottomrule 
			\end{tabular}
		\end{small}
	\end{center}
	\vskip -0.15in
\end{table}

Boxplot of MAPE for 100 different initializations are shown in Figure \ref{fig:overall-MAPE} for datasets ARFIMA, DJI, traffic and tree. 

\begin{figure*}[t]
	\centering
	\subfigure[ARFIMA]{\includegraphics[width=0.95\columnwidth]{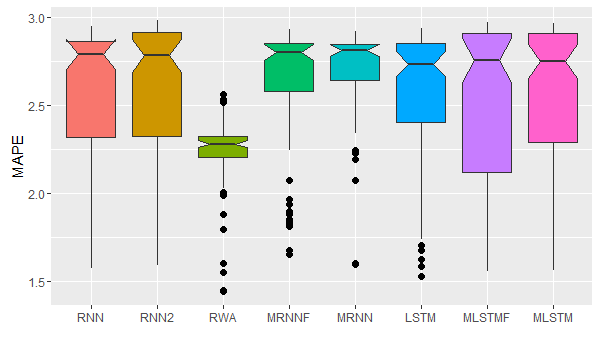}}
	\subfigure[DJI]{\includegraphics[width=0.95\columnwidth]{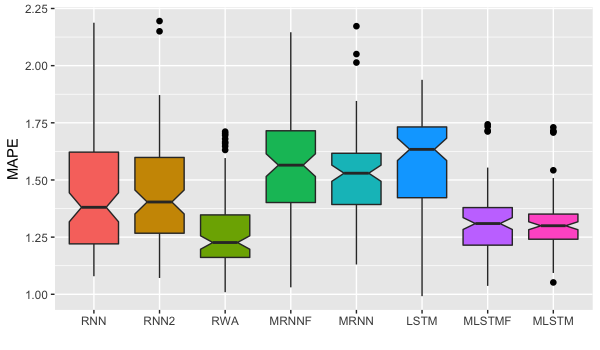}} \\
	\subfigure[traffic]{\includegraphics[width=0.95\columnwidth]{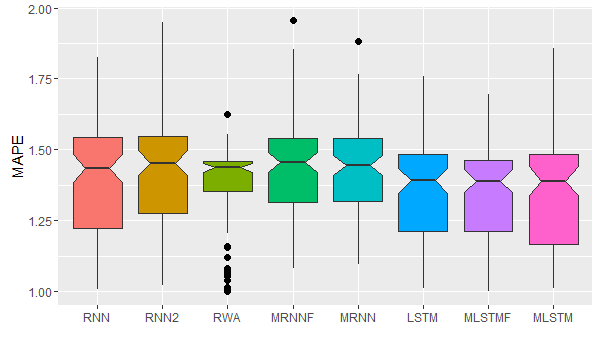}}
	\subfigure[tree]{\includegraphics[width=0.95\columnwidth]{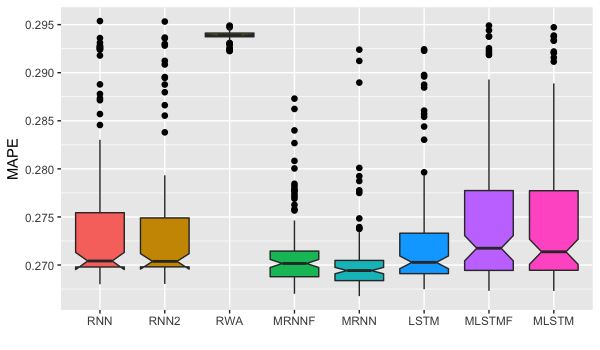}}
	\caption{Boxplot of MAPE for 100 different initializations.}
	\label{fig:overall-MAPE}
\end{figure*}

\subsection{Best Performance of the Models} \label{sec:best}

Best performance of the models, in terms of MAE and MAPE, are shown in Table \ref{tab:best-mae} \& \ref{tab:best-mape}.

\begin{table}[t]
	\caption{Best performance in terms of MAE.}
	\label{tab:best-mae}
	\vskip 0.15in
	\centering
	\begin{small}
		\begin{tabular}{lcccc}
			\toprule
			& ARFIMA & DJI (x100)& Traffic & Tree \\
			\cmidrule[0.5pt]{1-5} 
			ARFIMA 	& 0.8190 & 0.1800 	 & 230.99  & 0.2174 \\
			\cmidrule[0.5pt]{1-5}
			RNN 	& 0.8378 & 0.1667	 & 213.96  & 0.2185 \\
			RNN2 	& 0.8196 & 0.1667	 & \textbf{212.69} & 0.2186 \\
			RWA 	& 1.3307 & 0.1862	 & 227.01  & 0.2378 \\
			MRNNF 	& 0.8179 & \textbf{0.1649} 	 & 214.88  & 0.2172 \\
			MRNN 	& \textbf{0.8171} & 0.1654 & 213.79  & \textbf{0.2170} \\
			\cmidrule[0.5pt]{1-5} 
			LSTM 	& 0.8197 & 0.1671	 & 214.22  & {0.2170} \\
			MLSTMF	& 0.8191 & 0.1716	 & 216.15  & 0.2177 \\
			MLSTM 	& 0.8193 & 0.1702	 & 216.12  & 0.2175 \\
			\bottomrule 
		\end{tabular}
	\end{small}
\end{table}

\begin{table}[t]
	\caption{Best performance in terms of MAPE.}
	\label{tab:best-mape}
	\vskip 0.15in
	\centering
	\begin{small}
		\begin{tabular}{lcccc}
			\toprule
			& ARFIMA & DJI 		 & Traffic & Tree \\
			\cmidrule[0.5pt]{1-5} 
			ARFIMA 	& 2.8424 & 1.8334 	 & 1.6942 & 0.2676 \\
			\cmidrule[0.5pt]{1-5}
			RNN 	& 1.5729 & 1.0789	 & 1.0075 & 0.2680 \\
			RNN2 	& 1.5905 & 1.0714	 & 1.0215 & 0.2680 \\
			RWA 	& \textbf{1.4408} 
			& 1.0091 & \textbf{0.9986} & 0.2923 \\
			MRNNF 	& 1.6508 & 1.0304 	 & 1.0816 & 0.2670 \\
			MRNN 	& 1.5967 & 1.1303	 & 1.0938 & \textbf{0.2668} \\
			\cmidrule[0.5pt]{1-5} 
			LSTM 	& 1.5282 & \textbf{0.9918} & 1.0099 & 0.2675 \\
			MLSTMF	& 1.5565 & 1.0368	 & 0.9990 & 0.2673 \\
			MLSTM 	& 1.5597 & 1.0518	 & 1.0098 & 0.2673 \\
			\bottomrule 
		\end{tabular}
	\end{small}
\end{table}

\subsection{Performance on a Dataset without Long Memory} \label{sec:exp-RNN}

We generated a sequence of length 4001 ($2000+1200+800$) using model \eqref{eq:RNN-MC}, which does not have long memory according to Corollary \ref{cor:RNN-W}. We refer to this synthetic dataset as the RNN dataset. The boxplots of error measures are presented in Figure \ref{fig:overall-RNN}. From the boxplots we can see that the performance of our proposed models is comparable with that of the true model RNN, except that the variation of the error measures is a bit larger.

\begin{figure*}[t]
	\centering
	\subfigure[RMSE]{\includegraphics[width=0.95\columnwidth]{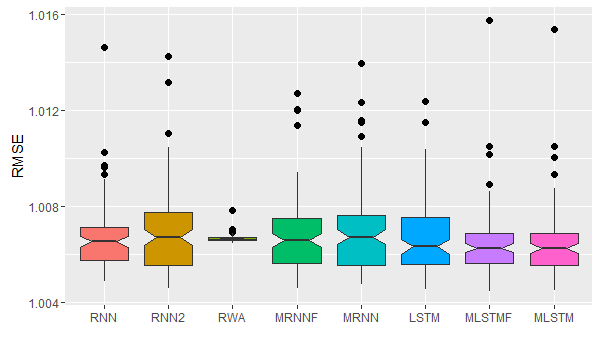}} \\
	\subfigure[MAE]{\includegraphics[width=0.95\columnwidth]{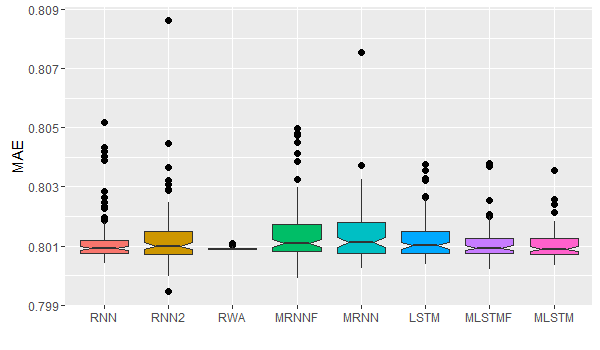}}
	\subfigure[MAPE]{\includegraphics[width=0.95\columnwidth]{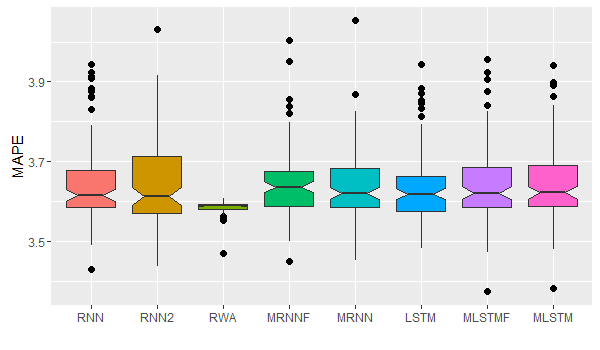}}
	\caption{Boxplot of RMSE, MAE and MAPE for 100 different initializations. Dataset: RNN.}
	\label{fig:overall-RNN}
\end{figure*}

\subsection{Experiment on Parameter $K$} \label{sec:exp-K}
Boxplot of RMSE for 100 different initializations are shown in Figure \ref{A-fig:arfima-RMSE-boxK}, \ref{A-fig:dji-RMSE-boxK}, \ref{A-fig:traffic-RMSE-boxK} and \ref{A-fig:tree-RMSE-boxK} for datasets ARFIMA, DJI, traffic and tree, respectively. 
Values of $K$ are appended to the abbreviations of the proposed models to distinguish the settings. 
For example, model ``MRNN25" means the MRNN model with $K=25$. 
There are 20 models with different settings in total, and they are sorted by the average RMSE in ascending order from left to right.

For MRNN and MRNNF, the prediction is generally better for a larger $K$, and they have smaller average RMSE than all the baseline models regardless of the choice of $K$. Interestingly, the performance of MLSTM and MLSTMF gets better when $K$ becomes smaller, and with $K=25$, they can outperform LSTM on ARFIMA and traffic datasets. Thus, we recommend a large $K$ for MRNN and MRNNF models, while for the more complicated MLSTM models, $K$ deserves more investigation to balance expressiveness and optimization.

\begin{figure*}[ht]
	\centering
	\includegraphics[width=0.95\textwidth]{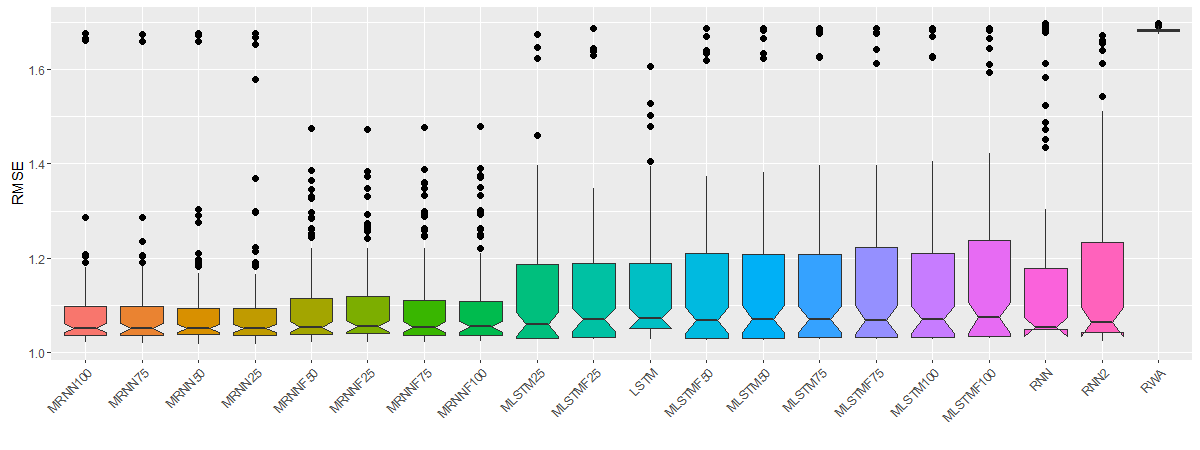}
	\caption{Boxplot of RMSE for 100 different initializations. Dataset: ARFIMA.}
	\label{A-fig:arfima-RMSE-boxK}
\end{figure*}

\begin{figure*}[ht]
	\centering
	\includegraphics[width=0.95\textwidth]{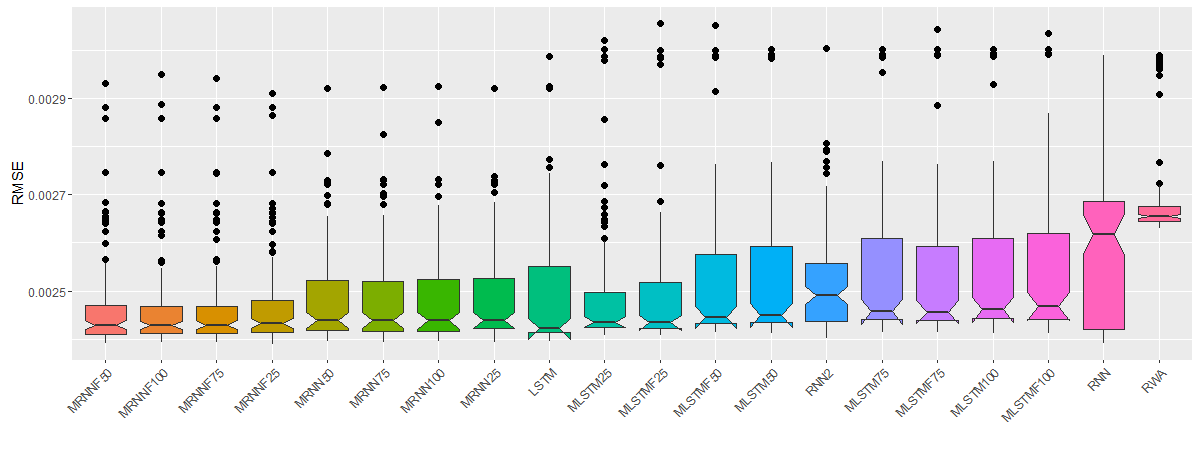}
	\caption{Boxplot of RMSE for 100 different initializations. Dataset: DJI.}
	\label{A-fig:dji-RMSE-boxK}
\end{figure*}

\begin{figure*}[ht]
	\centering
	\includegraphics[width=0.95\textwidth]{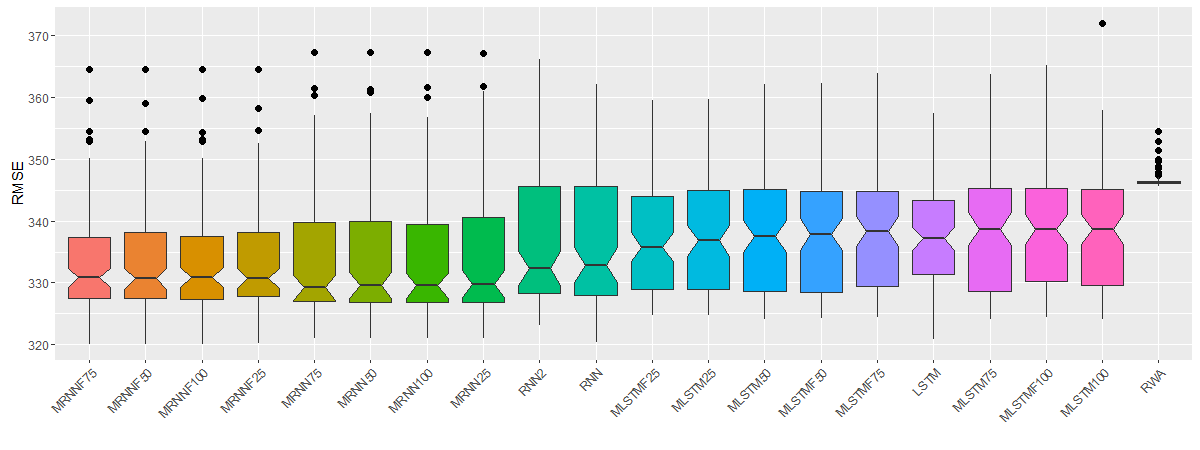}
	\caption{Boxplot of RMSE for 100 different initializations. Dataset: traffic.}
	\label{A-fig:traffic-RMSE-boxK}
\end{figure*}

\begin{figure*}[ht]
	\centering
	\includegraphics[width=0.95\textwidth]{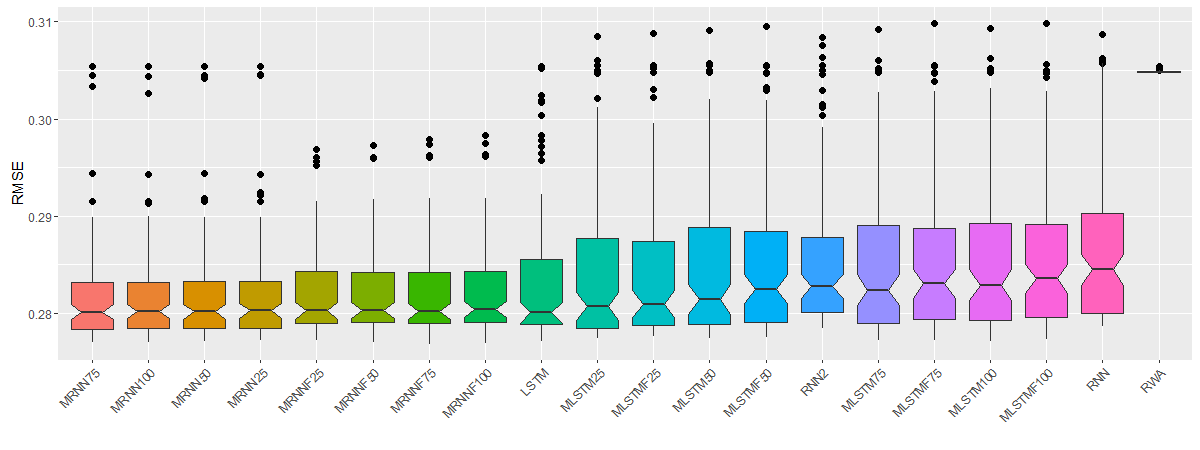}
	\caption{Boxplot of RMSE for 100 different initializations. Dataset: tree.}
	\label{A-fig:tree-RMSE-boxK}
	\vskip -0.15in
\end{figure*}

\end{document}